\documentclass{article} 
\usepackage{iclr2027_conference,times}
\usepackage[utf8]{inputenc}
\usepackage{amsmath,amssymb,amsfonts,amsthm,mathtools}
\usepackage{graphicx}
\usepackage{textcomp}
\usepackage{xcolor}
\usepackage{booktabs}
\usepackage{multirow}
\usepackage{url} 
\graphicspath{{figures/}}

\newcommand{\fig}[4][tbp]{
  \begin{figure}[#1]
    {\centering{\includegraphics[#2]{#3}}\par}
    \caption{#4}\label{fig:#3}
  \end{figure}
}

\theoremstyle{plain}
\newtheorem{proposition}{Proposition}

\theoremstyle{definition}
\newtheorem{assumption}{Assumption}
\theoremstyle{remark}

\newcommand{\enc}{e}
\newcommand{\tenc}{\bar{e}}
\newcommand{\pred}{p}
\newcommand{\Uset}{\mathcal{U}_h}
\newcommand{\R}{\mathbb{R}}
\newcommand{\norm}[1]{\left\lVert#1\right\rVert}

\title{What Do Latent Predictive Vehicle Representations Retain? Measuring State, Geometry, and Local Response}

\author{Enzo Nicol\'as Spotorno \\
Department of Informatics and Statistics \\
Federal University of Santa Catarina \\
\texttt{enzoniko@lisha.ufsc.br}
\AND
Josafat Leal Filho \\
Department of Informatics and Statistics \\
Federal University of Santa Catarina \\
\AND
Ant\^onio Augusto Fr\"ohlich \\
Department of Informatics and Statistics \\
Federal University of Santa Catarina
}

\iclrfinalcopy 
\begin{document}

\maketitle

\begin{abstract} 
Models of vehicle dynamics learned from logged states and commands complement physics-based models, and latent world models, which predict in a learned representation, are used to plan and train controllers in other domains. Vehicle controllers are usually specified in physical terms: costs, limits, and references depend on position, yaw angle, speed, and yaw rate, and the optimizer compares or differentiates predicted outcomes across nearby commands. A latent model placed in such a controller must therefore let these quantities be recovered and must change its predictions with commands as the vehicle does, and prediction error on its own latent targets measures neither. We contribute a measurement protocol for action-conditioned latent predictors with a physical readout that separately tests retention, physical-neighborhood organization, forecasting, and local response to command perturbations, using an untrained-encoder reference and three matched response paths that locate errors in the representation or the predictor. In a case study of a temporal joint-embedding predictive model trained on signals logged in IPG CarMaker, the representations retain the measured planar outputs, though an untrained encoder of the same architecture retains them slightly better; future-command input improves one-second forecasts with retention nearly unchanged; and responses to small command pulses diverge from the simulator already in latent coordinates, raising regret when choosing among nearby commands in all comparisons. Updating the predictor on responses corrects them locally at a cost in forecast accuracy. Measuring retention, forecasting, and local response separately is thus what qualifies a predictive latent as a candidate model for control, and the protocol provides the basis for its closed-loop evaluation.
\end{abstract}

\section{Introduction}\label{sec:intro} 

Vehicle motion control relies on models that predict how position, yaw angle, speed, and yaw rate respond to steering, throttle, and brake commands. Single-track and tire models provide these predictions with interpretable parameters, and their accuracy depends on how well those parameters match the vehicle, its tires, and the road \citep{rajamani2012vehicle,pacejka2012tyre}. Learned and physics-constrained models trained on histories of measured states and commands can capture effects that simplified models omit, including in high-performance and racing conditions \citep{spielberg2019neural,chrosniak2024deepdynamics,rhode2024vehicle}. Latent world models carry learned prediction into a learned representation: they encode observations, predict future embeddings under candidate actions, and have been used to plan and train policies in simulated control tasks \citep{hafner2019planet,hafner2020dreamer,hansen2024tdmpc2}, and driving systems have begun to use latent world-model objectives to improve end-to-end planners \citep{li2025law}.
 
Joint-embedding predictive architectures learn such representations without reconstructing observations: an encoder is trained so that a predictor can match the embedding of a future view \citep{assran2023ijepa,balestriero2025lejepa}, and conditioning the predictor on future controls makes it a controlled transition model \citep{schwarzer2021spr,bagatella2025tdjepa}. Placing such a model in a vehicle controller imposes requirements that this training signal does not address. Vehicle controllers are usually specified in physical quantities: tracking costs, stability and actuation limits, and references are written in position, yaw angle, speed, and yaw rate \citep{rajamani2012vehicle}, so a latent model in such a controller must let these quantities be recovered from its representation. Cost estimates, local approximations, and interpolation between visited conditions are more reliable when physically similar conditions remain close after encoding. The optimizer also depends on how predictions change with the command: sampling-based planners rank nearby candidate commands by predicted cost, and gradient- or linearization-based methods use the derivative of the prediction with respect to the command. An error in that response can change the chosen command even when average forecast error is small. Agreement with the model's own latent targets measures none of these three properties: retention of physical quantities, their organization, and local command response.
 
We contribute a measurement protocol that tests these properties separately for any action-conditioned latent predictor with a readout to physical quantities (Figure~\ref{fig:overview}b). Frozen readouts, compared with an untrained encoder of the same architecture, measure retention, physical-neighbor overlap and a collision check measure organization, and forecasts are compared with persistence and with a supervised forecaster that uses the same inputs. Local response to small command perturbations is compared along three matched paths, the plant's recorded outputs, the decoded embedding of the realized future window, and the decoded prediction, which separate error present in the target representation from error added by prediction, and an offline ranking of candidate commands shows whether response errors change a decision. A formal analysis ties each measurement to the claim it supports (Section~\ref{sec:theory}). Retention, organization, and forecasting need only logged data, and the response lens needs matched command perturbations from a common initial condition, which a simulator provides exactly and repeated test-track runs could approximate.
 
We apply the protocol in a case study chosen so that every lens can be measured: per-vehicle models trained on ego-vehicle signals from continuous IPG CarMaker episodes on one route with one automated driver \citep{ipg2024carmaker}. The model encodes a $0.5$-second history of $22$ recorded channels, including six planar reference outputs, and predicts future embeddings from future steering, throttle, and brake commands. Because these outputs are observed, a readout measures what survives predictive compression rather than estimating a hidden state, and the history and auxiliary channels may carry further information that the objective is free to keep. Five training conditions vary data collection, command availability, and command content in the prediction targets, across six vehicle configurations and three seeds ($90$ fits), so that a difference in retention, organization, or response can be traced to one training change. The model, a temporal convolutional encoder with a controlled neural ODE predictor integrated by RK4 \citep{ulmen2025state}, is an experimental instrument rather than a proposed architecture.
 
We hypothesize that these properties are separable in predictive latents trained on logged driving: training without reconstruction retains the observed outputs, while local command response is not ensured by forecast accuracy and can remain misaligned with the vehicle where forecasts improve on persistence. Three observations motivate this hypothesis. The prediction targets are future windows that contain the future outputs, and a sufficiently accurate predictor cannot merge conditions whose supported futures differ (Section~\ref{sec:theory}); since the outputs evolve continuously, discarding them would raise the prediction loss. Bounded prediction error places no bound on the derivative of the prediction with respect to the command (Section~\ref{sec:theory}). Logged commands come from a driver acting on the vehicle's state, so command and state vary together in the training data; a predictor can then explain futures through the state and learn little about the separate effect of a command, a limitation familiar from closed-loop system identification \citep{forssell1999closed}.

\section{Related Work}\label{sec:related} 

Joint-embedding predictive architectures train an encoder by predicting the embedding of one view from that of another instead of reconstructing observations. LeJEPA defines the family by predictive agreement between views plus non-degenerate embeddings, allows the views to be crops, modalities, or temporal samples, and relates the embedding distribution to the risk of downstream probes \citep{balestriero2025lejepa}. This framing motivates our question: when the views are a vehicle's recent signals and its future under given commands, which physical quantities does the learned embedding keep? Stable training relies on anti-collapse mechanisms, either a moving-average target with an asymmetric predictor \citep{grill2020byol,assran2023ijepa} or variance and covariance penalties \citep{bardes2022vicreg}; \citet{tang2023understanding} analyze how self-predictive learning avoids collapse under idealized linear conditions. Our model combines a moving-average target with VICReg-style penalties, rather than LeJEPA's SIGReg regularizer.
 
Conditioning the predictor on actions turns latent prediction into a controlled transition model. SPR predicts momentum-encoder embeddings through an action-conditioned transition \citep{schwarzer2021spr}, TD-JEPA learns policy-conditioned latent dynamics from reward-free data collected under several policies \citep{bagatella2025tdjepa}, and TS-JEPA applies joint-embedding prediction to time series \citep{ennadir2025tsjepa}; our temporal predictor follows this line, conditioned on logged future driver commands. Such models support decisions in two ways. PlaNet, Dreamer, and TD-MPC2 learn latent dynamics together with reward or value targets and plan or train policies in them \citep{hafner2019planet,hafner2020dreamer,hansen2024tdmpc2}; E2C constrains latent dynamics to be locally linear so that an optimal-control routine can use them \citep{watter2015e2c}. In both cases the controller depends on how predictions change when actions change, which for a model trained only to predict embeddings is an empirical property that we measure directly. Formal accounts specify when prediction yields a state, through action-conditional predictions sufficient for all future tests \citep{littman2001psr}, reward and transition structure that bounds value differences \citep{gelada2019deepmdp,ferns2011bisimulation}, and output histories that determine the state \citep{hermann1977nonlinear,takens1981detecting}, and Section~\ref{sec:theory} adapts the behavioral-distance idea to a reward-free, finite-horizon, support-restricted setting.
 
Recent studies measure physical content in predictive latents directly. \citet{tan2026know} ask which hidden parameters (mass, drag, contact stiffness) a latent world model acquires, and they vary inputs, targets, horizons, and regularization, repeat the analysis on real-robot recordings, and test latent model-predictive control. Retention follows what the prediction target requires, so a modality supplied as input is not retained unless the target requires it, and a parameter recoverable from observations can remain nearly absent from the latent. LeWorldModel trains an action-conditioned joint-embedding world model from pixels with SIGReg, probes it for physical quantities, and plans with it \citep{maes2026lewm}. \citet{ulmen2025state} learn state-space models with a sequence encoder, a controlled neural ODE integrated with RK4, joint-embedding training, and a decoder fitted afterwards on true latent states, evaluated on decoded pendulum images; we build on their controlled-ODE predictor and post-hoc readout. We extend these measurements from content to organization and response. Because our reference outputs are observed in the recorded signals, readouts quantify what survives predictive compression, a question for which an input-access reference plays the role that recoverability certificates play for hidden parameters. Probes show that a quantity is accessible to a readout \citep{anand2019atari,hewitt2019control}, and the three response paths show how the predictive path carries that information when commands change.
 
Vehicle dynamics is a setting where learned models address a known gap. Single-track and tire models describe sideslip, yaw, and tire forces explicitly and remain strong baselines, but their accuracy depends on how well parameters match the vehicle and road \citep{rajamani2012vehicle,pacejka2012tyre}. \citet{spielberg2019neural} trained a neural network on histories of measured states and commands that tracked paths better than a tuned physics model on a real vehicle, across dry and low-friction surfaces without an explicit friction estimate, while the physics model performed better in simulation, where it matched the plant exactly. \citet{rhode2024vehicle} keep known single-track kinematics in a neural differential equation and learn the remaining terms, improving prediction over the simplified physics model with less training data than an unconstrained neural ODE. Our predictor uses the continuous-time controlled form without vehicle equations, so everything it knows about the vehicle is learned from data, which is what our measurements examine. In driving systems, LAW adds a latent world-model loss, conditioned on the planner's own waypoints, to a camera-based end-to-end planner and reports improved planning \citep{li2025law}.
 
Together, these works show that predictive latents can be trained without reconstruction, conditioned on actions, used for planning, and adopted in driving, and that what they retain depends on their inputs and targets. Recovery of physical quantities, their organization, and command response have been examined for hidden physical parameters, decoded images, and task performance, but, to our knowledge, not jointly for an ego-vehicle predictive representation. Ego-vehicle signals make that joint examination tractable, because the reference outputs are measured, commands are explicit model inputs, and matched command perturbations can be replayed in a simulator and compared in physical units.

\section{System Model}\label{sec:model}

\begin{figure}[t]
\centering
\IfFileExists{figures/overview.png}{\includegraphics[width=\linewidth]{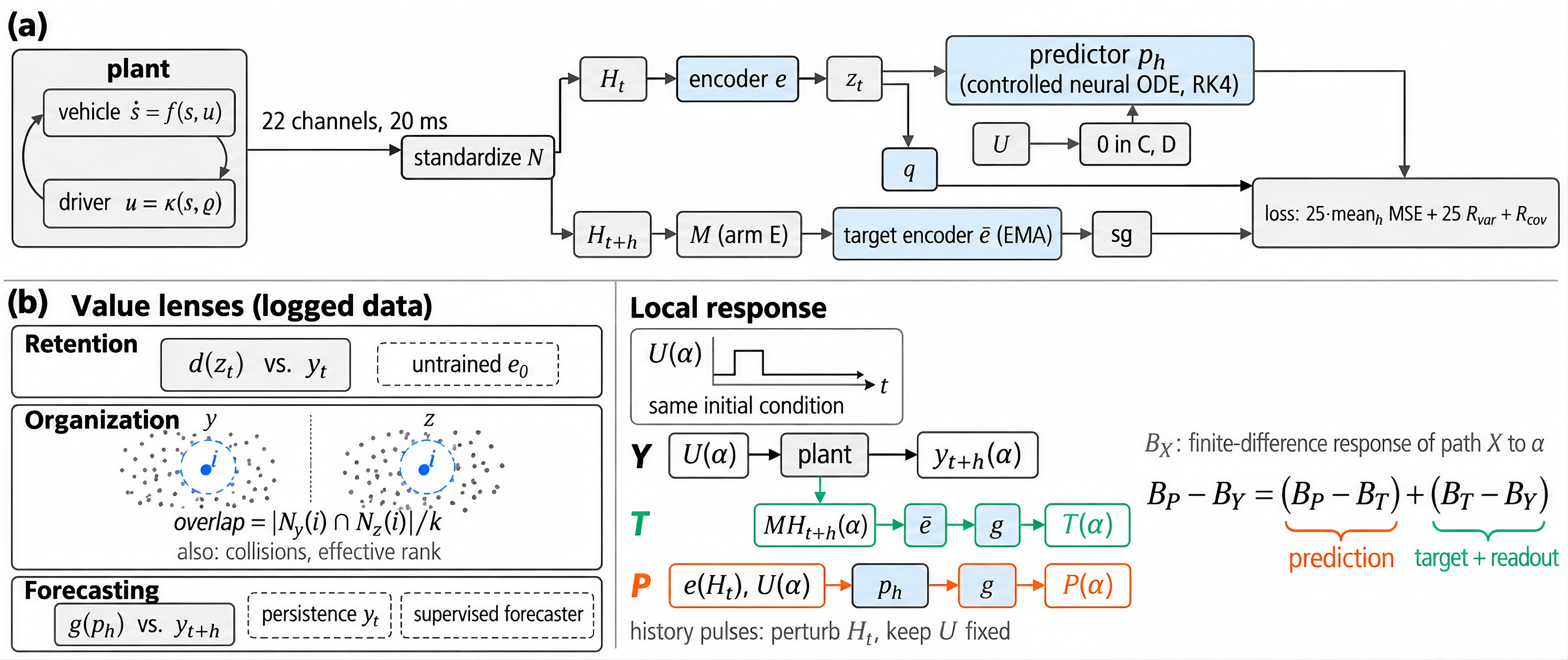}}{%
\fbox{\parbox[c][2.25in][c]{0.97\linewidth}{\centering\textbf{Figure placeholder} (full text width, about 2.3\,in tall)\\[4pt]
(a) Case study: plant and driver, recorded channels, encoder $\enc$, controlled predictor $\pred_h$, target encoder $\tenc$ with mask $M$, loss.\\
(b) Measurement protocol: value lenses and the three response paths $Y$, $T$, $P$.}}}
\caption{(a) Case study. The plant and its driver generate the recorded channels. The online encoder $\enc$ and the controlled predictor $\pred_h$ forecast the target encoder's embedding of the masked future window, and switches mark the masked predictor commands of arms C and D and the target mask $M$ of arm E. (b) Measurement protocol for any encoder, target encoder, command-conditioned predictor, and frozen readouts $d$ and $g$. Value lenses use logged data, and forecasting establishes predictive competence before responses are read. The response lens applies matched command pulses from a common initial condition and compares the plant output ($Y$), the decoded target embedding of the realized future window ($T$), and the decoded prediction ($P$), and history pulses perturb past commands, which changes $H_t$ and the plant state.}
\label{fig:overview}
\end{figure}

\textbf{Plant.}
Episodes are generated with IPG CarMaker $15.1$ \citep{ipg2024carmaker}, whose vehicle model couples the sprung body with suspension kinematics and compliance, tires, steering, hydraulic brakes, powertrain, and aerodynamics. We write the simulated vehicle and its automated driver (the plant) as $\dot s=f(s,u),$ and $ u=\kappa(s,\varrho),$ where $s$ is the full simulator state, including tire, suspension, actuator, and powertrain variables \citep{pacejka2012tyre}, $u$ contains the driver commands, and the driver model $\kappa$ follows the route $\varrho$ at a requested speed, so the commands depend on the vehicle state, the closed-loop condition behind the third motivation in Section~\ref{sec:intro}. Six configurations span front-, rear-, and all-wheel drive, combustion, hybrid, and battery-electric powertrains, from passenger cars to an open-wheel racing car (Table~\ref{tab:vehicles}). Every measurement uses this plant as its reference, so the findings describe agreement with CarMaker's vehicle model, and their transfer to physical vehicles depends on the simulator's fidelity.

\textbf{Recorded signals.}
Every $20$\,ms the simulator records $22$ channels $o_k=(y_k,u_k,a_k)$, with axes following ISO 8855 \citep{iso8855}. The reference outputs $y=(X,Y,\psi,v_x,v_y,r)$ are the position of the vehicle-frame origin, which CarMaker places at road level below the rearmost point of the vehicle, the yaw angle, the longitudinal and lateral velocities of the body's center of mass in vehicle axes, and the yaw rate. Position and yaw angle are expressed in earth-fixed axes that coincide with the vehicle frame at the start of each run, and the yaw angle is tracked continuously, growing by $2\pi$ with each full turn. The commands $u$ are the steering-wheel angle in radians and the accelerator- and brake-pedal activities in $[0,1]$. The $13$ auxiliary channels $a$ (Table~\ref{tab:channels}) are signals that vehicles with electronic stability control mostly measure, whereas lateral velocity and global position are estimated on real vehicles \citep{rajamani2012vehicle}. Channels are standardized with training statistics into $\bar o_k$, and the history window $H_t=(\bar o_{t-24},\dots,\bar o_t)$ covers $0.5$\,s.

\textbf{Model and loss.}
An online encoder maps $H_t$ to $z_t=\enc(H_t)\in\R^{64}$. A target encoder $\tenc$, an exponential moving average of $\enc$, encodes the future window $MH_{t+h}$, where the mask $M$ sets the command channels to zero, their training mean, in one training condition and is the identity otherwise. The predictor $\pred_h(z_t,U_{t:t+h})=z_t+\int_t^{t+h} f_\theta(z(\tau),u(\tau))\,d\tau$ integrates a learned controlled vector field under the future commands, with RK4 steps of $0.02$\,s and each command held over its step (Figure~\ref{fig:overview}a). Training minimizes the loss equation $\mathcal{L} = {}\frac{25}{2}\sum_{h\in\{0.5,1.0\}} \operatorname{MSE}\!\left(\pred_h(\enc(H_t),U_{t:t+h}),\mathrm{sg}\,\tenc(MH_{t+h})\right)+25\,\mathcal{R}_{\mathrm{var}}\big(q(\enc(H_t))\big)+\mathcal{R}_{\mathrm{cov}}\big(q(\enc(H_t))\big),$ where $\mathrm{sg}$ stops gradients, $q$ is a training projector, $\mathcal{R}_{\mathrm{var}}$ and $\mathcal{R}_{\mathrm{cov}}$ are the VICReg-style variance and covariance penalties \citep{bardes2022vicreg}, and the MSE averages over batch and latent coordinates, so the prediction weight $25$ applies to the mean over the two horizons. Both encoders are restored from the epoch with the lowest validation value of the loss equation, and all settings are shared by every fit (Appendix~\ref{app:implementation}).

\textbf{From the plant to the loss.}
With $\Phi_\Delta$ integrating the plant over one $20$\,ms step, $O$ recording the channels, $N$ standardizing them, and $W_t$ extracting the window ending at $t$, a trajectory $s_{k+1}=\Phi_\Delta(s_k,\kappa(s_k,\varrho))$ yields $\bar o=N\circ O(s,u)$, and each prediction term compares $\pred_h(\enc(W_t(\bar o)),U_{t:t+h})$ with $\tenc(M\,W_{t+h}(\bar o))$. The plant enters the loss only through the recorded channels, and no term reconstructs $y$.

\textbf{Readouts and training conditions.}
After training, a source readout $d$ maps $z_t$ to $y_t$, and a target readout $g$ maps target embeddings to $y_{t+h}$ and is applied unchanged to predicted embeddings, so prediction errors reach the decoded output directly \citep{ulmen2025state}. Each vehicle drives one closed route of length $2597.29$\,m in two data collections, each a schedule of start positions and requested speeds that changes between episodes. The \emph{varied-start/speed} collection uses different starts and generally higher speeds than the \emph{nominal} one (Table~\ref{tab:episodes}), so their contrast changes the commands and the speeds and track sections visited together. Five training conditions, or arms, combine the collections with two masks. Arms A and B use the nominal and varied collections with all commands, arms C and D repeat them with masked predictor commands, and arm E repeats B with masked target commands (Table~\ref{tab:arms}). Masked commands are zero in standardized units, and the history windows keep their commands in every arm. Each arm has six vehicles and three seeds, giving $90$ fits.

\section{Measurement Protocol}\label{sec:metrics}

Figure~\ref{fig:overview}b summarizes the lenses, and Appendix~\ref{app:measure} derives every reported quantity by explicit formulas.

\textbf{What prediction constrains.}\label{sec:theory}
Five idealized statements, proved in Appendices~\ref{app:setup}--\ref{app:coords}, assume a predictor that is $L$-Lipschitz in its latent argument, with error at most $\varepsilon$ on the supported commands, a uniform bound stronger than a small mean test loss, and determine how each measurement can be read. (i) Conditions whose supported future targets differ by more than $2\varepsilon$ are at least that difference minus $2\varepsilon$, divided by $L$, apart in latent space, so latent proximity constrains physical distance only where the targets are informative about the outputs. (ii) Along history pulses, the rank of the target response is bounded by that of the encoder's response, while future pulses act on the predictor alone. (iii) A uniformly small prediction error places no bound on the derivative with respect to the command, as $f_n(x)=n^{-1}\sin(n^2x)\to 0$ with $f_n'(0)=n$ shows. (iv) An invertible change of latent coordinates, compensated in the predictor and the readout, leaves the decoded forecast and its derivatives unchanged. (v) An exact predictor may merge conditions with identical supported futures, so an effective rank below six is by itself no evidence of lost information. Organization is therefore measured directly, local response separately from forecast accuracy, and responses in physical units.

\textbf{Predictive competence.}
Latent prediction is compared with target-encoder persistence through the ratio equation $\rho_h = \left( \sum_i \norm{\pred_h(\enc(H_i), U_i) - \tenc(MH_{i,h})}^2 \right) \Big/ \left( \sum_i \norm{\tenc(MH_i) - \tenc(MH_{i,h})}^2 \right),$ where $H_i$ and $H_{i,h}$ are the history and future windows of sample $i$. A fit with $\rho_h<1$ counts as a latent win, computed within each model because each fit has its own latent coordinates. A fit whose decoded forecast $g(\pred_h(z_t,U_{t:t+h}))$ has lower error than physical persistence $\hat y_{t+h}=y_t$ counts as a physical win, and decoding the true target, $g(\tenc(MH_{t+h}))$, gives the error of the readout stage alone. A supervised forecaster with the readout's hidden width and budget maps the flattened history and the future commands to the six outputs and measures how predictable the recorded futures are from these inputs.

\textbf{Retention and organization.}
Linear (ridge) and nonlinear source readouts are fitted on training windows, selected on validation windows, and frozen for testing. With one architecture and a $350$-epoch budget across representations, their $R^2$ measures accessibility to a fixed-capacity readout \citep{hewitt2019control,belinkov2022probing}. An untrained encoder with the same architecture and seed, with readouts fitted by the same recipe and the same neighborhoods, separates what the architecture passes through from what predictive training adds. For organization, outputs and full $64$-dimensional latents are standardized with training statistics, and distances are Euclidean. For a test query $i$, let $N_y(i)$ and $N_z(i)$ be its $k=30$ nearest training references in output and latent coordinates, drawn from other episodes. Overlap is $|N_y(i)\cap N_z(i)|/k$ \citep{venna2001neighborhood}, and shuffling the latent--output correspondence gives a null. A collision flags a closest latent reference beyond the query's $90$th percentile of output distances. Standardized covariance eigenvalues $\lambda_j$ give the effective rank $r_{\mathrm{eff}}=\exp(-\sum_j p_j\log p_j)$, $p_j=\lambda_j/\sum_l\lambda_l$ \citep{roy2007effective}, at most $6$ for the outputs and $64$ for the latent.

\textbf{Three paths for local response.}
For a pulse family $\alpha\mapsto U(\alpha)$ applied from a common initial condition, the three paths are $Y_h(\alpha) = y_{t+h}(\alpha)$, \quad $T_h(\alpha) = g(\tenc(MH_{t+h}(\alpha)))$, \quad $P_h(\alpha) = g(\pred_h(\enc(H_t), U(\alpha))).$ $Y$ is the plant's recorded output, $T$ decodes the target embedding of the realized future window, and $P$ decodes the predicted embedding. For history pulses the pulse alters the recorded history, and $P$ uses $\enc(H_t(\alpha))$ with the future commands fixed. The same finite-difference stencil gives the response $B$ of each path, a directional derivative with respect to the pulse parameter in units of output and command training standard deviations (Appendix~\ref{app:measure}). A direction is accepted when its estimates at pulse sizes $0.01$, $0.02$, and $0.04$ differ by less than $15\%$, which checks the linear regime, its norm exceeds $10^{-6}$, and its signal-to-noise ratio over repeated runs is at least $5$, a test that removes no nonzero, scale-consistent response because repeats agree to within $3\times10^{-14}$. For each accepted direction we report the relative error $\lVert B_P-B_Y\rVert/\lVert B_Y\rVert$, the cosine between $B_P$ and $B_Y$, and the gain $\lVert B_P\rVert/\lVert B_Y\rVert$, and the same for $T$. The identity $B_P-B_Y=(B_P-B_T)+(B_T-B_Y)$ (Appendix~\ref{app:attribution}) assigns a discrepancy to prediction or to the target and readout, and its second term combines representation content with readout capacity. The predicted latent response $D_\alpha\pred_h$ is also compared with the realized target-latent response $D_\alpha\tenc(MH_{t+h}(\alpha))$ in training-standardized target coordinates, which locates a discrepancy before the readout. To test whether response errors change a decision, each accepted future-pulse case defines $5$ to $7$ candidate commands, the template and its pulses. The model chooses the candidate with the lowest decoded cost with respect to the endpoint of a separate free-driver reference run, and normalized regret places the simulator cost of that choice between the best and the worst candidate, with keeping the template as the reference (Appendix~\ref{app:measure}).

\section{Results}\label{sec:results}\label{sec:design}

\textbf{Setup and aggregation.}
Each collection has eight $300$-second episodes per vehicle, split $4/2/2$ into training, validation, and test episodes, and windows never cross a split. Test episodes play no role in the selection of encoders or readouts. Arms are compared on a common panel of the three test settings that appear in no training episode of either collection (Table~\ref{tab:episodes}), which holds out start and speed settings with the vehicles, the route, and the driver fixed. Action contrasts are A--C and B--D, collection contrasts are B--A and D--C, and target masking is E--B. Matched vehicle and seed pairs are differenced, seeds are averaged within each vehicle, and the six vehicle effects are summarized with percentile bootstrap intervals over vehicles ($10{,}000$ draws of the mean for common-panel contrasts, $5{,}000$ of the median for local-response contrasts), so paired effects can differ from differences of the medians in Table~\ref{tab:main}. The intervals describe these six configurations, carry no multiplicity correction, and are reported with sign counts. The untrained-encoder overlap and the supervised forecaster are scored on each collection's own test episodes and paired with arms A and B (Appendix~\ref{app:implementation}).

\begin{table}[t]
\caption{Common-panel medians over $18$ fits per arm. Source $R^2$ is the current-output readout, and \emph{untr.} is a nonlinear readout fitted by the same recipe on an untrained encoder (shared by arms A/C and by B/D/E). Forecast is the one-second decoded prediction. Latent and physical wins count fits that beat target-encoder and physical persistence (Section~\ref{sec:metrics}).}
\label{tab:main}
\centering
\small
\begin{tabular}{lcccccccc}
\toprule
& \multicolumn{3}{c}{\textbf{Source $R^2$}} & \textbf{$1$\,s} & \textbf{Neighbor} & \multicolumn{2}{c}{\textbf{Latent wins}} & \textbf{Phys.\ wins} \\
\textbf{Arm} & lin. & nonlin. & untr. & \textbf{fcst.} & \textbf{overlap} & $0.5$\,s & $1$\,s & $1$\,s \\
\midrule
A & $0.929$ & $0.973$ & $0.988$ & $0.821$ & $0.267$ & $12/18$ & $18/18$ & $5/18$ \\
B & $0.913$ & $0.968$ & $0.984$ & $0.869$ & $0.267$ & $18/18$ & $18/18$ & $7/18$ \\
C & $0.931$ & $0.974$ & $0.988$ & $0.819$ & $0.266$ & $13/18$ & $18/18$ & $2/18$ \\
D & $0.913$ & $0.972$ & $0.984$ & $0.757$ & $0.264$ & $13/18$ & $18/18$ & $5/18$ \\
E & $0.931$ & $0.974$ & $0.984$ & $0.887$ & $0.313$ & $18/18$ & $18/18$ & $11/18$ \\
\bottomrule
\end{tabular}
\end{table}

\subsection{Retention, organization, and forecasting}

\textbf{Predictive competence.}
On the common panel, $74/90$ fits beat target-encoder persistence at $0.5$\,s and $90/90$ at $1$\,s. Decoded forecasts reach median macro $R^2$ of $0.868$ and $0.858$ at $0.5$ and $1$\,s, against $0.956$ and $0.885$ for persistence of the measured outputs, with $0/90$ and $30/90$ paired wins. Decoding the true target gives median $R^2$ of about $0.981$, which places most of the gap in prediction. The supervised forecaster reaches $0.992$ at $0.5$\,s and $0.985$ at $1$\,s and exceeds the matched seed-$17$ decoded forecasts ($0.921$ and $0.895$) in $12/12$ vehicle and collection cases at both horizons, so the recorded futures are predictable from the model's inputs and the gap lies in the learned predictive path.

\textbf{Retention.}
Frozen readouts recover the planar outputs in every fit, with median macro $R^2$ from $0.913$ to $0.931$ for linear readouts and from $0.968$ to $0.974$ for nonlinear ones. An untrained encoder of the same architecture, with a readout fitted by the same recipe on the same panel, reaches $0.984$ to $0.988$ and scores higher than the trained encoder in $89/90$ paired fits, against $36$ distinct references. Larger readouts narrow the median paired gap from $0.015$ to $0.005$ in the seed-$17$ pairs of arms A and B, and the untrained encoder stays higher in all $12$ (Appendix~\ref{app:extra}). Recovery is highest for longitudinal velocity and yaw rate and lowest for position (Table~\ref{tab:perout}), and with one route and one driver, position recovery may partly reflect route regularities.

\textbf{Organization, collisions, and variance.}
Latent neighborhoods keep physical organization, with median overlap $0.270$ against a null of $0.008$. The untrained encoder again scores higher, $0.304$ against $0.260$ on the own-collection test episodes of arms A and B, in $36/36$ pairs (vehicle-mean difference $-0.050$ $[-0.058,-0.041]$), and on the common panel for $k=10$, $30$, and $100$ in all seed-$17$ fits of arms A--D. Collisions are rare, with a median fraction of zero for trained and untrained encoders at the $80$th, $90$th, and $95$th percentiles, so this lens flags only large discrepancies here. Standardized effective rank is $2.690$ for the latent and $5.087$ for the outputs on the same windows, while the same latents support nonlinear readouts above $0.96$ macro $R^2$, as statement (v) of Section~\ref{sec:theory} allows.

\textbf{Action conditioning and collection.}
Under the varied collection, giving the predictor the future commands (B--D) improves the one-second decoded forecast by $0.067$ macro $R^2$ $[0.049,0.084]$, positive in $6/6$ vehicles, and changes source decoding by $-0.003$, so command availability changes the predictive path and leaves retention unchanged. Under the nominal collection (A--C) the effect is $-0.007$ $[-0.042,0.027]$, so the benefit of commands depends on the collection. The collection has no consistent effect, $0.020$ $[-0.026,0.063]$ with commands (B--A) and $-0.054$ $[-0.085,-0.023]$ without (D--C, negative in $6/6$), and since it changes both commands and operating conditions, this cannot be attributed to input variation alone.

\textbf{Target command information.}
Masking the command channels in the targets (E--B) improves neighbor overlap by $0.052$ $[0.037,0.064]$, positive in $6/6$ vehicles and close to the untrained encoder's level ($0.317$ against $0.328$ for seed $17$), and the forecast by $0.042$ $[0.010,0.081]$, positive in $5/6$, while source decoding changes by $+0.006$. With data and architecture fixed, target content thus shapes organization and forecasting, although the histories still contain the commands. At $1$\,s, arm E beats physical persistence in all six vehicles during braking and deceleration, and arm B in four of six (Appendix~\ref{app:extra}).

\subsection{Local response of the predictive path}\label{sec:response}

\textbf{Response panel.}
Exact model derivatives show that the action-conditioned arms respond to future commands, the action-masked arms give zero response, and a pulse in the second half-second leaves the $0.5$-second forecast unchanged in $90/90$ fits. Agreement with the vehicle is measured on one new scenario per vehicle on the training route (start $2450$\,m, requested speed $195$\,km/h, source endpoint at $20$\,s), with commands on an affine template between the endpoints of a free-driver reference run. Future and history pulses occupy two half-second intervals each. The $900$ matched runs realize their commands to $2\times10^{-7}$ command standard deviations. Of $144$ direction and horizon cases, $87$ are accepted, $18$ late future pulses give the expected zero response at $0.5$\,s, and the other $39$, pedal pulses below the resolution floor in two vehicles and steering or gas responses that change by more than $15\%$ across pulse sizes, provide no reference derivative (Table~\ref{tab:coverage}). Because the affine template differs from free driving, values are checked first, and arm E beats local latent persistence in $18/18$ fits and physical persistence in $13/18$ and $12/18$ at $1$\,s (Table~\ref{tab:localval}).

\textbf{Agreement with the vehicle.}
Table~\ref{tab:agree} reports one-second agreement, averaged over seeds, then the median over accepted directions and over vehicles. Arms A, B, and E respond to commands with magnitudes and directions that match the vehicle only partially, and their history-pulse responses are five to nine times larger than the vehicle's. The target and readout path aligns better for history pulses, and its future-pulse error is also large, so an accurate value readout can still transmit responses inaccurately. Masking target commands (E--B) reduces the median predictor response error in all six vehicles, by $0.013$ for future pulses and $2.684$ for history pulses, and improves the future-response cosine in $5/6$ vehicles by a median of $0.189$, with an interval that includes zero. Command availability ($3/6$ vehicles improve under varied data) and the collection give no consistent change. These conclusions hold where both learned paths pass the scale check, where the model beats both local persistence references, and for scale thresholds of $10$ to $30\%$ and norm floors of $10^{-7}$ to $10^{-5}$. At a second anchor on the same route ($2250$\,m, $180$\,km/h), with future pulses at one amplitude, the predictors of arms A, B, and E again under-respond (gains $0.15$ to $0.42$, cosines $0.10$ to $0.26$), while the target path over-responds (Appendix~\ref{app:extra}).

\begin{table}[t]
\caption{One-second vehicle-balanced three-path response agreement. Relative error and gain are with respect to the simulator path. Arms C and D have zero future-pulse response by construction, so only their history rows are shown.}
\label{tab:agree}
\centering
\small
\begin{tabular}{llcccc}
\toprule
\textbf{Arm} & \textbf{Family} & \textbf{Tgt/read.\ rel.\ err.} & \textbf{Pred/read.\ rel.\ err.} & \textbf{Pred.\ cosine} & \textbf{Pred.\ gain} \\
\midrule
A & future & $1.878$ & $1.028$ & $0.172$ & $0.320$ \\
A & history & $0.510$ & $6.785$ & $0.075$ & $6.723$ \\
B & future & $2.202$ & $1.022$ & $0.197$ & $0.472$ \\
B & history & $0.623$ & $8.193$ & $0.057$ & $8.154$ \\
C & history & $0.536$ & $7.007$ & $0.054$ & $6.769$ \\
D & history & $0.561$ & $8.761$ & $0.066$ & $9.001$ \\
E & future & $1.671$ & $0.995$ & $0.461$ & $0.495$ \\
E & history & $0.489$ & $5.103$ & $0.198$ & $5.136$ \\
\bottomrule
\end{tabular}
\end{table}

\textbf{The discrepancy is present before the readout.}
Without the readout (Table~\ref{tab:latent}), the predictor of arms B and E under-responds to future-command pulses, with a response norm of $5$ to $17\%$ of the realized one, and over-responds to history pulses by a factor of $5$ to $8$, with cosines between $0.26$ and $0.39$. Part of the mismatch therefore arises in the predictive path itself. For history pulses the predictor and readout error exceeds the target and readout error in every output and vehicle (arm E), and for future pulses both stages contribute.

\textbf{A supervised forecaster matches the direction but not the size.}
On the same pulses, the supervised forecaster of arms A and B (seed $17$, one-second $R^2$ of $0.985$) follows the direction of the vehicle's future-pulse response (cosine $0.76$, against $0.05$ and $0.15$ for the matched predictors) and overestimates its size, and its history-pulse errors are about a third of the predictors' (Table~\ref{tab:supervised}). Accurate values thus leave the response size inaccurate even without a latent, and the latent predictive path adds direction error and history over-response.

\textbf{Response errors change a local decision.}
Forty future-pulse cases pass the acceptance criteria and separate the candidates' simulator costs by more than five repeat standard deviations. Choosing with the decoded predicted cost gives vehicle-mean normalized regret of $0.549$ (arm B) and $0.575$ (arm E), against $0.194$ for keeping the template, and the model's regret is higher in all $12$ comparisons (six vehicles, arms B and E). With absolute offsets removed, regret is $0.092$ and $0.079$ for the target and readout path and $0.373$ for the predictor and readout path in both arms, so the predictive path adds decision error beyond that of the target representation. This finite-candidate test measures a local choice among commands, one component of what model-predictive control requires.

\textbf{Training the predictor on responses trades against forecasting.}
In an exploratory analysis, updating only the predictor on the responses at the tuning anchor lowers response error at the second anchor in $12/12$ fits and raises one-second forecast RMSE on the original test episodes in $12/12$ fits (Appendix~\ref{app:extra}). Under these fixed-budget updates, local response and forecast accuracy compete as training targets.

\section{Discussion and Conclusion}\label{sec:discussion}\label{sec:conclusion}

\textbf{Retention comes from the inputs, and training shapes the predictive path.}
An untrained encoder retains the observed outputs and their neighborhoods slightly better than the trained ones, command availability changes the forecast while retention stays fixed, and a supervised forecaster predicts the recorded futures from the same inputs. Retention of observed channels is therefore a property of the inputs and the architecture, which predictive training preserves without adding to it, and forecast and response errors point to the predictive path. A retention score becomes informative next to an untrained reference whenever the evaluated quantities are among the inputs, as they usually are for logged vehicle signals. Statement (i) of Section~\ref{sec:theory} is consistent with this pattern, since it constrains what an accurate predictor may merge and leaves open how the retained information is used (Appendix~\ref{app:separation}). With the outputs observed, a supervised forecaster is the direct alternative for prediction, and the protocol addresses latent models whose physical quantities are reachable only through a readout.

\textbf{Target content shapes organization and response.}
Masking commands in the prediction targets improves neighborhood overlap, one-second forecasts, forecasts during braking and deceleration, and the predictor's history-pulse response. The common direction of these four measurements suggests that command channels in the targets let the predictor match its targets partly through the commands themselves, at the expense of how the vehicle state evolves. Linear probes are consistent with this reading, since future commands explain more of arm B's target latent than of arm E's in all six vehicles, although they measure content and not use (Appendix~\ref{app:extra}). The results extend the finding of \citet{tan2026know} that the target decides which physical parameters a latent acquires to organization and local response.

\textbf{Forecast accuracy and local response separate.}
Arm E beats both local persistence references at the response anchor in most fits, yet its responses are too small for future commands and too large for history changes, already in latent coordinates, and these errors raise regret among nearby commands, a pattern that matches the third motivation of our hypothesis. In logged driving, commands vary together with the vehicle state, so a predictor can fit its targets through the history and learn little about the separate effect of a command \citep{forssell1999closed}, and under-response to future commands with over-response to history changes is consistent with such a predictor. The supervised forecaster, trained on the same driving, follows the direction of the vehicle's response but not its size, so part of the response error follows from the data and part from the latent path. A planner that ranks or differentiates predictions across commands relies on this property, which forecast scores leave unmeasured, and should test it along the directions it will use.

\textbf{Limitations.}
The episodes come from one route and one automated driver, the held-out settings keep the route, the driver, and the vehicles fixed, and the bootstrap intervals describe six selected vehicles. One hyperparameter protocol and one readout budget are shared across vehicles and arms. The references are an untrained encoder and a supervised forecaster, without a calibrated physics model or a reconstruction-trained encoder, and the simulated signals are noise-free. The response study uses one scenario per vehicle with an affine command template, and the ranking test and the predictor updates use single anchors and seed-$17$ fits.

\textbf{Conclusion.}
We asked what an action-conditioned latent predictive representation of a vehicle retains, how it organizes that information, and whether its predictions change with commands as the vehicle does, and the protocol answers each question separately. In the CarMaker case study the answers support our hypothesis. Retention and organization match an untrained encoder of the same architecture, forecasting and local response depend on the predictive path and on the content of its targets, and models that beat persistence still respond to commands differently from the vehicle, enough to change which command they would choose. Command sequences designed independently of the driver, other routes and drivers, physics-based references, and real vehicles would show how far these findings extend. For any latent model proposed for control, the protocol identifies which of these properties a controller can rely on, and it provides the measurements against which closed-loop results can be interpreted.

\subsection*{Ethics statement}
This work uses simulated vehicle trajectories generated in IPG CarMaker. No human subjects, personal data, or field experiments are involved. The study measures representation content and makes no claim of closed-loop safety or control sufficiency. Model responses and representation measurements must not be interpreted as safety or controller-performance certificates.
 
\subsection*{Reproducibility statement}
Sections~\ref{sec:model} and~\ref{sec:results} state the model, loss, horizons, collections, arms, splits, and evaluation protocol. Section~\ref{sec:response} and Appendix~\ref{app:implementation} record the simulator-timed pulses, numerical gate, coverage, and binary provenance. Section~\ref{sec:metrics} defines all metrics. The derivations are in Appendices~\ref{app:setup}--\ref{app:coords}. An anonymized release of code, checkpoints, per-fit tables, response traces, and figure-generation scripts will be available publicly in a final version of the paper.
 
\subsection*{AI use statement}
AI tools (Grammarly, and Claude) assisted manuscript text review (cohesion and orthography), figure generation (Fig. 1 only, ChatGPT), and code review. The authors are responsible for the experiments, numerical results, citations, and final text.


\bibliography{references}
\bibliographystyle{iclr2027_conference}

\appendix

\section{Setup and Assumptions}\label{app:setup}

The statements in Section~\ref{sec:theory} are idealized, local, and relative to the supported control distribution. Let $\mathcal{X}$ be the set of physical conditions reachable under the data-generating protocol and $x\in\mathcal{X}$ the condition at an anchor time, which includes the plant state $s$ of Section~\ref{sec:model}, with reference outputs $y(x)=(X,Y,\psi,v_x,v_y,r)$. The simulator carries further internal variables, and $y$ records six of them. Let $\Uset$ be the supported future control sequences of duration $h$ and $F_U$ the map induced by applying $U\in\Uset$. Let $H(x)$ be the signal window ending at the anchor time and $H^+(x,U)$ the window ending $h$ later. Write $\enc(x):=\enc(H(x))$ and define the target feature $\tau_U(x):=\tenc\big(H^+(x,U)\big)$.

\begin{assumption}[Supported prediction accuracy]\label{as:eps}
$\norm{\pred(\enc(x),U)-\tau_U(x)}\le\varepsilon$ for all $x\in\mathcal{X}$ and $U\in\Uset$.
\end{assumption}

\begin{assumption}[Predictor Lipschitz continuity]\label{as:lip}
$\norm{\pred(z,U)-\pred(z',U)}\le L\norm{z-z'}$ for all $z,z'$ in the image of $\enc$ and all $U\in\Uset$.
\end{assumption}

Assumption~\ref{as:eps} is a uniform bound on the training support, which is stronger than a small mean test loss. Assumption~\ref{as:lip} holds for a finite network with some constant, and the bounds below degrade linearly in it.

\section{What Prediction Constrains}\label{app:separation}

\begin{proposition}[Behavioral separation]\label{prop:sep}
Under Assumptions~\ref{as:eps} and~\ref{as:lip}, for all $x,x'\in\mathcal{X}$ and $U\in\Uset$,
\begin{equation}
\norm{\tau_U(x)-\tau_U(x')} \le L\,\norm{\enc(x)-\enc(x')} + 2\varepsilon .
\label{eq:sepapp}
\end{equation}
\end{proposition}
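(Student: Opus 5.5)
The bound follows from a single triangle inequality that passes through the two predictions $\pred(\enc(x),U)$ and $\pred(\enc(x'),U)$. Fix $x,x'\in\mathcal{X}$ and $U\in\Uset$, and split the target difference as
\begin{equation*}
\tau_U(x)-\tau_U(x') = \bigl(\tau_U(x)-\pred(\enc(x),U)\bigr) + \bigl(\pred(\enc(x),U)-\pred(\enc(x'),U)\bigr) + \bigl(\pred(\enc(x'),U)-\tau_U(x')\bigr).
\end{equation*}
Taking norms and applying the triangle inequality bounds $\norm{\tau_U(x)-\tau_U(x')}$ by the sum of the three norms.

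Each term is then bounded separately. The first and third are prediction errors at supported pairs $(x,U)$ and $(x',U)$, so Assumption~\ref{as:eps} bounds each by $\varepsilon$; together they give the $2\varepsilon$. In the middle term both latent arguments $\enc(x)$ and $\enc(x')$ lie in the image of $\enc$, which is exactly the domain of Assumption~\ref{as:lip}, and the command sequence $U\in\Uset$ is the same in both. Assumption~\ref{as:lip} therefore bounds it by $L\norm{\enc(x)-\enc(x')}$. Adding the three bounds gives \eqref{eq:sepapp}.

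There is no substantial obstacle; the only point to check is that the hypotheses apply where they are used. Assumption~\ref{as:eps} is uniform over $\mathcal{X}\times\Uset$, so it holds at both $x$ and $x'$ with the same $U$, and Assumption~\ref{as:lip} is stated only on the image of $\enc$, which is where both arguments lie.

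Rearranging gives the form used in statement (i) of Section~\ref{sec:theory}: if $\norm{\tau_U(x)-\tau_U(x')}>2\varepsilon$ and $L>0$, then
\begin{equation*}
\norm{\enc(x)-\enc(x')}\ge \frac{\norm{\tau_U(x)-\tau_U(x')}-2\varepsilon}{L}.
\end{equation*}
Taking the supremum over $U\in\Uset$ gives the strongest separation the supported commands allow.
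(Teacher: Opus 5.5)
Your proposal is correct and matches the paper's proof: you insert the two predictions $\pred(\enc(x),U)$ and $\pred(\enc(x'),U)$, apply the triangle inequality, and bound the two outer terms by $\varepsilon$ each via Assumption~\ref{as:eps} and the middle term by $L\norm{\enc(x)-\enc(x')}$ via Assumption~\ref{as:lip}. Your rearranged separation form and the supremum over $U$ also match how the paper uses the bound afterwards.
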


\begin{proof}
Insert the two predictions and apply the triangle inequality:
\begin{align*}
\norm{\tau_U(x)-\tau_U(x')}&\le\norm{\tau_U(x)-\pred(\enc(x),U)}+\norm{\pred(\enc(x),U)-\pred(\enc(x'),U)}\\
&\quad+\norm{\pred(\enc(x'),U)-\tau_U(x')}\\
&\le 2\varepsilon+L\norm{\enc(x)-\enc(x')}.
\end{align*}
\end{proof}

At $\varepsilon=0$, $\enc(x)=\enc(x')$ implies $\tau_U(x)=\tau_U(x')$ for every $U\in\Uset$, so an exact predictor keeps apart conditions with different supported targets. Taking the supremum over $U$ gives the finite-horizon behavioral pseudometric $d^\tau_h(x,x')=\sup_{U\in\Uset}\norm{\tau_U(x)-\tau_U(x')}\le L\norm{\enc(x)-\enc(x')}+2\varepsilon$. It is a pseudometric because conditions with identical supported targets have distance zero.

\paragraph{Physical recovery.}\label{app:recovery}
Suppose supported targets are locally informative about the outputs, $d^\tau_h(x,x')\ge\alpha\norm{y(x)-y(x')}$ for some $\alpha>0$ on a neighborhood. Then
\begin{equation}
\norm{\enc(x)-\enc(x')} \ge \frac{\alpha\,\norm{y(x)-y(x')}-2\varepsilon}{L},
\end{equation}
so the encoder separates output differences larger than $2\varepsilon/\alpha$. The assumption fails when conditions that differ in $y$ produce the same targets under every supported control sequence, which one route and one deterministic driver make possible. With $\varepsilon>0$, separation holds only above a finite scale, so a zero collision fraction cannot be read as injectivity.

\paragraph{Scale for the trained predictors.}
The vector field of the predictor is a spectrally normalized MLP with $\tanh$ activations, so its Lipschitz constant in $z$ at fixed command is at most the product $L_f$ of the three layer spectral norms. Computed exactly on the $90$ checkpoints, $L_f$ lies between $1.000003$ and $1.000369$. One RK4 step of length $\Delta$ is then Lipschitz with constant at most $\sum_{j=0}^{4}(\Delta L_f)^j/j!\le e^{\Delta L_f}$, so over $25$ and $50$ steps of $0.02$\,s, $L\le1.649$ at $0.5$\,s and $L\le2.718$ at $1$\,s. On $10{,}000$ random pairs of common-panel latents with shared commands per seed-$17$ fit, the largest observed expansion ratio has arm medians of at most $1.06$ at $0.5$\,s and $1.14$ at $1$\,s and stays below the bound in every fit. Taking $\varepsilon$ as the $95$th percentile of the prediction error, the separation scale $2\varepsilon/L$ is $3.3$ to $6.0$ times the median latent nearest-neighbor distance (arm medians, seed $17$). The statement is therefore consistent with the trained models, and at this scale it motivates the neighborhood measurements without replacing them.

\paragraph{Predictive equivalence.}\label{app:quotient}
Conditions with $\tau_U(x)=\tau_U(x')$ for all $U\in\Uset$ are equivalent for prediction. An encoder that separates these equivalence classes admits an exact predictor, and by the $\varepsilon=0$ case above any exact predictor requires that separation. Distinctions within a class are left to the encoder. Covariance effective rank measures the spread of sampled latents and is a separate quantity from the dimension of these classes. Enlarging $\Uset$ can refine the classes, so conclusions are relative to the studied driving protocol.

\paragraph{Reading neighborhood overlap.}
If locally
\[
\ell\norm{y(x)-y(x')}\le\norm{\enc(x)-\enc(x')}\le L_e\norm{y(x)-y(x')},
\]
latent and physical distance rankings agree up to the factor $L_e/\ell$. Partial replacement of neighbors is then expected once the neighbor radius is comparable to the scale over which this distortion reorders distances. An overlap below one is therefore consistent with an anisotropic injective encoding, and lost information has to be shown by readouts or collisions.

\section{Local Response: What the Identities Bound}\label{app:rank}

\paragraph{History pulses.}
Let $\beta\in\R^{k}$ parameterize simulator-realizable past command pulses with future commands $U$ fixed, inducing histories $H(\beta)$ and latents $Z(\beta)=\enc(H(\beta))$. If prediction is exact and differentiable along this family, $\tau_h(\beta)=\pred_h(Z(\beta),U)$, and the chain rule gives
\begin{equation}
D_\beta\tau_h = D_z\pred_h\,D_\beta Z, \qquad \operatorname{rank} D_\beta\tau_h\le\operatorname{rank} D_\beta Z,
\end{equation}
because the rank of a product is at most the rank of either factor. Target responses that vary in $d$ independent pulse directions therefore require the encoder to vary in at least $d$ directions along the same family.

\paragraph{Future pulses.}
With the history fixed and $U(\alpha)$ perturbed, $D_\alpha\tau_h=D_U\pred_h\,D_\alpha U$ involves only the predictor, so future pulses bound the rank of $D_U\pred_h$ and leave encoder rank open. If future controls depend on the condition, $U=\pi(x)$, then $D_x[\pred(\enc(x),\pi(x))]=D_z\pred\,D\enc+D_U\pred\,D\pi$, and the policy term can contribute rank. Holding the command template fixed across the members of a family separates the two terms. Covariance effective rank describes a sampled distribution and is a third, separate quantity.

\paragraph{Values do not bound derivatives.}\label{app:valuederiv}
The example of Section~\ref{sec:theory} extends to predictors. The predictor $\tilde\pred(z,U)=\pred(z,U)+\varepsilon\sin(\omega c^\top U)w$ with unit vectors $c,w$ differs from $\pred$ by at most $\varepsilon$ in value and by $\varepsilon\omega$ in the derivative along $c$, which is unbounded in $\omega$. Conversely, a poor value fit can have an accurate derivative in a given direction. Responses are therefore reported per direction and amplitude and accepted only after the scale and repeat checks of Section~\ref{sec:metrics}.

\paragraph{Error attribution.}\label{app:attribution}
For the three paths of Section~\ref{sec:metrics}, $J_P-J_Y=(J_P-J_T)+(J_T-J_Y)$ holds for any shared linear finite-difference stencil, before and after derivative convergence. $J_P$ involves the online encoder and predictor, and $J_T$ involves the target encoder applied to the realized future window. The second term combines target-representation content and readout capacity, and separating the two requires a readout-free comparison such as the latent responses below.

\section{Coordinate Invariance}\label{app:coords}

\begin{proposition}[Reparameterization invariance]\label{prop:coords}
Let $\pred_h(\enc(H),U)=\tenc(H^+)$ on the support, and let $T_o,T_T$ be invertible. With $\enc'=T_o\circ\enc$, $\tenc'=T_T\circ\tenc$, $\pred_h'(z,U)=T_T(\pred_h(T_o^{-1}(z),U))$, and $g'=g\circ T_T^{-1}$, exact prediction is preserved and $g'(\pred_h'(\enc'(H),U))=g(\pred_h(\enc(H),U))$.
\end{proposition}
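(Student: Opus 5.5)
The plan is to substitute the primed maps into each expression and cancel the coordinate changes, using only invertibility of $T_o$ and $T_T$. I will treat exact prediction and the decoded forecast as two separate claims.

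\textbf{Exact prediction.} For any $H$ in the support and any $U$, I expand $\pred_h'(\enc'(H),U)=T_T\big(\pred_h(T_o^{-1}(T_o(\enc(H))),U)\big)$. Since $T_o^{-1}\circ T_o$ is the identity, this equals $T_T(\pred_h(\enc(H),U))$. The hypothesis $\pred_h(\enc(H),U)=\tenc(H^+)$ then turns it into $T_T(\tenc(H^+))=\tenc'(H^+)$, which is exact prediction in the primed system.

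\textbf{Decoded forecast.} From the previous step, $\pred_h'(\enc'(H),U)=T_T(\pred_h(\enc(H),U))$. Applying $g'=g\circ T_T^{-1}$ cancels $T_T$ and gives $g(\pred_h(\enc(H),U))$. This cancellation does not use exactness, so the decoded forecast is unchanged for every history $H$ in the domain of $\enc$ and every $U$, not only on the support. It follows that the three response paths $P_h(\alpha)$, and likewise $T_h(\alpha)=g'(\tenc'(\cdot))=g(\tenc(\cdot))$, are identical as functions of the pulse parameter. Hence all finite-difference responses and all derivatives agree, which covers statement (iv) of Section~\ref{sec:theory}, with no differentiability of $T_o$ or $T_T$ required.

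\textbf{Main obstacle.} The only delicate point is domains: $T_o^{-1}$ must be applied to points in the image of $T_o$, and $T_T^{-1}$ to points in the image of $T_T$. I would state that $T_o$ and $T_T$ are bijections of the latent space $\R^{64}$, or at least onto their images. In that case every composition above is well defined on $\enc'(H)$ and on the predicted points, since those lie in $T_o(\mathrm{im}\,\enc)$ and $T_T(\cdot)$ respectively. If derivatives of the latent maps themselves were wanted, for example the latent response $D_\alpha\pred_h$, I would also assume $T_o$ and $T_T$ are diffeomorphisms. The chain rule then makes these latent derivatives transform covariantly, by $DT_T$, while the decoded derivatives stay invariant.
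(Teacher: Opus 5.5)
Your proposal is correct and matches the paper's proof: you expand $\pred_h'(\enc'(H),U)$ to $T_T(\pred_h(\enc(H),U))$, use exact prediction to obtain $\tenc'(H^+)$, and let $g'$ cancel $T_T$. Your two additions are valid refinements of the paper's closing remark: the decoded identity does not use exactness, and the composed functions are literally equal, so their derivatives agree without assuming $T_o$ or $T_T$ differentiable.
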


\begin{proof}
$\pred_h'(\enc'(H),U)=T_T(\pred_h(\enc(H),U))=T_T(\tenc(H^+))=\tenc'(H^+)$, and applying $g'$ cancels $T_T$. When the maps are differentiable, derivatives of the physical forecast are equal as well.
\end{proof}

The invariance is exact at zero loss. At finite loss a reparameterization changes the metric of the residual and interacts with the network class and the variance--covariance terms of the loss in Section~\ref{sec:model}. The proposition motivates comparing responses in physical-output coordinates.

\section{Measurement Definitions}\label{app:measure}

This appendix traces every reported quantity from the plant and the model by explicit formulas. Sections~\ref{sec:model} and~\ref{sec:metrics} state the same definitions in shorter form.

\paragraph{Plant and windows.}
At each $20$\,ms step $k$ the plant records $o_k=(y_k,u_k,a_k)\in\R^{22}$: the six reference outputs $y_k=(X,Y,\psi,v_x,v_y,r)$, the three driver commands $u_k$ (steering-wheel angle and accelerator- and brake-pedal activity), and $13$ auxiliary channels $a_k$ listed in Appendix~\ref{app:implementation}. Its internal state $s_k$ also contains tire, suspension, actuator, and driver variables that are not recorded. Channels are standardized with training statistics, $\bar o_k=(o_k-\mu)\oslash\sigma$. The source window is $H_t=(\bar o_{t-24},\dots,\bar o_t)$ and the future window at horizon $h$ is $H_{t+h}$, defined in the same way. $U_{t:t+h}$ is the standardized command sequence over $(t,t+h]$, and $\sigma_y,\sigma_u$ are the training standard deviations of outputs and commands.

\paragraph{Model.}
The encoder gives $z_t=\enc(H_t)\in\R^{64}$. The predictor $\pred_h(z_t,U_{t:t+h})$ integrates $\dot z=f_\theta(z,u)$ from $z_t$ with RK4 steps of $0.02$\,s, holding the sampled command fixed within each step ($25$ steps for $h=0.5$\,s, $50$ for $h=1$\,s). The vector field $f_\theta:\R^{64+3}\to\R^{64}$ is a spectrally normalized MLP with widths $67\to128\to128\to64$ and $\tanh$ activations between its linear layers. The target is $\bar z_{t+h}=\tenc(MH_{t+h})$, where $M$ sets the command channels to zero, the training mean, in arm E and is the identity otherwise. In arms C and D the predictor receives $U=0$. The source readout $d$ is fitted on pairs $(z_t,y_t)$ and the target readout $g$ on pairs $(\bar z_{t+h},y_{t+h})$ from training windows, both with standardized inputs and outputs, a mean-squared-error loss, and early stopping on validation windows. The yaw angle $\psi$ is tracked continuously and taken relative to the run start. The untrained reference replaces $\enc$ by its initialization $\enc_0$ and fits its own readout $d_0$ with the same procedure. The supervised forecaster $F_\phi$ maps $(\mathrm{vec}\,H_t,U_{t:t+h})$ to $y_{t+h}$.

\paragraph{Value metrics.}
For predictions $\hat y$ of an output $j$ over evaluation windows, $R^2_j=1-\sum(\hat y_j-y_j)^2/\sum(y_j-\bar y_j)^2$, and macro $R^2$ averages the six outputs. Current-output retention uses $\hat y_t=d(z_t)$, and $d_0(\enc_0(H_t))$ for the reference. The decoded forecast is $\hat y_{t+h}=g(\pred_h(z_t,U_{t:t+h}))$, the decoding ceiling is $g(\bar z_{t+h})$, and physical persistence is $\hat y_{t+h}=y_t$. Latent skill is the ratio $\rho_h$ of Section~\ref{sec:metrics} with persistence $\tenc(MH_t)$. For local value competence, $\mathrm{NRMSE}=\big(\tfrac16\sum_j((\hat y_j-y_j)/\sigma_{y,j})^2\big)^{1/2}$ at the zero member.

\paragraph{Pulse families.}
A future pulse on command $c$ over interval $I\in\{(0,0.5],(0.5,1]\}$\,s sets $U(q)=U_0+q\,e_c\mathbf{1}_I$, where $U_0$ is the command template, $e_c$ selects channel $c$, and $q$ is measured in units of $\sigma_{u,c}$. The source history $H_t$ is unchanged. A history pulse applies the same form to the recorded commands over $I'\in\{(-0.5,-0.26],(-0.26,-0.02]\}$\,s with the future template $U_0$ fixed, and the plant then changes $y$, $a$, and the hidden state $s$ inside $H_t$. Each member $q$ is a separate simulator run from the same initial condition.

\paragraph{Three paths and their derivatives.}
For a future pulse the three paths are
\begin{equation}
Y(q)=y_{t+h}(q),\qquad T(q)=g\big(\tenc(MH_{t+h}(q))\big),\qquad P(q)=g\big(\pred_h(\enc(H_t),U(q))\big),
\end{equation}
and for a history pulse $P(q)=g\big(\pred_h(\enc(H_t(q)),U_0)\big)$, with $Y$ and $T$ unchanged in form. For a path $X$, the response is the finite-difference estimate
\begin{equation}
B_X=\sigma_y^{-1}\odot\sum_i w_i\,\bar X(q_i),
\end{equation}
with repeat means $\bar X$, central weights $w=(-1,1)/(2s)$ at $q=(-s,s)$ for steering, and second-order one-sided weights $w=(-3,4,-1)/(2s)$ at $q=(0,s,2s)$ for pedals, signed by the admissible direction, with $s\in\{0.01,0.02,0.04\}$. The same weights give the componentwise propagated standard error $\epsilon_X=\big(\sum_i w_i^2\,\widehat{\mathrm{Var}}_i/n_i\big)^{1/2}$ from the repeat variances, and the gate uses $\mathrm{SNR}=\norm{B_Y}/\max(\norm{\epsilon_Y},10^{-8})$, a repeat-based check of numerical resolution. The floor $10^{-8}$ applies when repeats agree exactly. Repeat standard errors are zero in $120$ of the $144$ physical cases and at most $2.6\times10^{-14}$ otherwise. To first order in $q$,
\begin{align}
B_Y&\approx\sigma_y^{-1}\odot D_q\,y_{t+h}, &
B_T&\approx\sigma_y^{-1}\odot Dg\,D\tenc\,M\,D_qH_{t+h},\\
B_P^{\mathrm{fut}}&\approx\sigma_y^{-1}\odot Dg\,D_U\pred_h\,e_c\mathbf{1}_I, &
B_P^{\mathrm{hist}}&\approx\sigma_y^{-1}\odot Dg\,D_z\pred_h\,D\enc\,D_qH_t .
\end{align}
These expressions make three properties explicit. First, $D_qH_{t+h}$ contains the standardized changes of the recorded future outputs, including $D_q y_{t+h}$ at the window end, so $T$ re-encodes the plant's own response and $J_T-J_Y$ measures how faithfully $\tenc$ and $g$ transmit it. Second, the future-pulse predicted response involves no encoder derivative, and the history-pulse response passes through $D\enc$ applied to changes in the recorded outputs and commands. Third, $B_Y$ is a finite-horizon sensitivity of recorded outputs to pulse parameters, distinct from a state Jacobian $\partial s_{k+1}/\partial s_k$, and every comparison is made in output coordinates. Exact model derivatives replace the finite difference in $B_P^{\mathrm{fut}}$ by automatic differentiation through $g$ and the RK4 steps of $\pred_h$.

\paragraph{Response metrics.}
With $J_X=B_X$ for one direction and horizon, we report
\[
\frac{\norm{J_X-J_Y}}{\norm{J_Y}},\qquad \frac{J_X^\top J_Y}{\norm{J_X}\norm{J_Y}},\qquad \frac{\norm{J_X}}{\norm{J_Y}},
\]
the relative error, cosine, and gain. The cosine is left undefined when either norm is below $10^{-10}$. A direction is accepted when the three estimates at adjacent step sizes differ by less than $15\%$, the SNR of $J_Y$ is at least $5$, and $\norm{J_Y}>10^{-6}$. Vehicle-balanced summaries average seeds within a case, take the median over accepted directions within a vehicle, and report the median of vehicles.

\paragraph{Latent responses.}
In target coordinates standardized by $S=\mathrm{diag}\big(\max(\bar\sigma_i,0.1\,\mathrm{med}_l\,\bar\sigma_l)\big)$, with $\bar\sigma$ the training standard deviations of target coordinates, the realized and predicted latent responses are $L_T=S^{-1}\sum_iw_i\,\tenc(MH_{t+h}(q_i))$ and $L_P=S^{-1}\sum_iw_i\,\pred_h(\cdot,\cdot)(q_i)$ with the arguments of $P$. We report $\norm{L_P}/\norm{L_T}$, $\norm{L_P-L_T}/\norm{L_T}$, and their cosine, and require $L_T$ to pass the same gate.

\paragraph{Candidate ranking.}
For an accepted future-pulse case with candidates $q_1,\dots,q_m$ (including $q=0$), the cost of an output vector is $C(y)=\sum_{j\in\{Y,\psi,v_x,r\}}\big((y_j-y^\star_j)/\sigma_{y,j}\big)^2$, where $y^\star$ is the endpoint of a separate free-driver reference run. The model chooses $\hat k=\arg\min_k C(P(q_k))$, and normalized regret is $\big(C(Y(q_{\hat k}))-\min_kC(Y(q_k))\big)/\big(\max_kC(Y(q_k))-\min_kC(Y(q_k))\big)$, compared with the regret of $q=0$. The offset-removed diagnostic uses $\tilde P(q_k)=Y(0)+P(q_k)-P(0)$, and the same for $T$.

\paragraph{Geometry and paired effects.}
Overlap, the correspondence null, the collision fraction, and effective rank are defined in Section~\ref{sec:metrics}. For a contrast between arms $a$ and $b$, the vehicle effect is $\Delta_v=\tfrac13\sum_{\text{seeds}}(m_a-m_b)$ for a metric $m$. All six outputs and all $64$ latent coordinates are z-scored with training-reference statistics, the yaw angle entering as the continuous angle relative to the run start, and neighbors use Euclidean distance on $4096$ reference and $512$ query windows sampled deterministically per fit. Effective rank uses the squared singular values of the centered, standardized test values, so the physical and latent medians $5.087$ and $2.690$ correspond to $0.85$ and $0.04$ of their maxima. Overlapping windows within an episode are treated as dependent, so intervals are formed over vehicles. Common-panel contrasts report the mean of the six $\Delta_v$ with a $95\%$ percentile bootstrap over vehicles with $10{,}000$ draws, and local-response contrasts report the median vehicle effect with a percentile bootstrap of the median and $5{,}000$ draws.

\paragraph{Scope of the protocol.}
Retention and geometry require recorded reference outputs. The target path requires an encoder of realized future windows, and for a model without a separate target encoder the model's own encoder takes its place. Response measurement requires matched command perturbations from a common initial condition. A simulator provides them exactly. On a physical vehicle, repeated runs from matched conditions could approximate them, with the repeat variance entering the resolution gate above. Model-side derivatives require differentiable $\pred_h$ and $g$, or the same finite-difference stencil applied to the model. The definitions apply to any encoder and predictor with these interfaces, whatever their training objective.

\section{Implementation and Protocol Details}\label{app:implementation}\label{app:metrics}

\paragraph{Vehicles and collections.}
The six configurations are Audi R8 Weather, IPG CompanyCar EV, IPG CompanyCar ModularPT AxleSplit P2, MB CClass350e, McLaren F1, and DemoCar Ackermann. Each has a nominal RaceDriver collection and a varied-start/speed collection on one closed Hockenheim route of length $2597.29$\,m, each with eight $300$-second episodes split $4/2/2$ into train, validation, and test. Table~\ref{tab:episodes} lists the start position and requested speed of every episode, identical for the six vehicles, and Table~\ref{tab:arms} the five training conditions. The common panel uses the three test settings that appear in no training episode of either collection, $(1300\,\mathrm{m},180\,\mathrm{km/h})$ and $(2200\,\mathrm{m},190\,\mathrm{km/h})$ from the nominal and $(2300\,\mathrm{m},212\,\mathrm{km/h})$ from the varied test episodes. The varied test setting $(1000\,\mathrm{m},190\,\mathrm{km/h})$ repeats a nominal training setting and is left out.

\begin{table}[h]
\caption{Episode schedule per collection, as start position (m) and requested speed (km/h).}
\label{tab:episodes}
\centering
\scriptsize
\setlength{\tabcolsep}{4pt}
\begin{tabular}{lcccccccc}
\toprule
& \multicolumn{4}{c}{\textbf{Training}} & \multicolumn{2}{c}{\textbf{Validation}} & \multicolumn{2}{c}{\textbf{Test}} \\
\textbf{Collection} & 1 & 2 & 3 & 4 & 1 & 2 & 1 & 2 \\
\midrule
Nominal & 5/160 & 500/170 & 1000/190 & 1500/200 & 700/175 & 1750/185 & 1300/180 & 2200/190 \\
Varied start/speed & 80/185 & 650/195 & 1200/205 & 1850/215 & 1450/200 & 2150/210 & 1000/190 & 2300/212 \\
\bottomrule
\end{tabular}
\end{table}

\begin{table}[h]
\caption{Five matched training conditions, with three seeds ($17$, $29$, $43$) per vehicle, giving $90$ fits.}
\label{tab:arms}
\centering
\scriptsize
\begin{tabular}{llll}
\toprule
\textbf{Arm} & \textbf{Collection} & \textbf{Predictor commands} & \textbf{Target commands} \\
\midrule
A & Nominal & Available & Available \\
B & Varied start/speed & Available & Available \\
C & Nominal & Masked & Available \\
D & Varied start/speed & Masked & Available \\
E & Varied start/speed & Available & Masked \\
\bottomrule
\end{tabular}
\end{table} Seeds are $17$, $29$, and $43$. Table~\ref{tab:vehicles} lists the configurations, taken from the CarMaker $15.1$ example data sets. All use IPG RealTime tire models, hydraulic brakes, a steering-wheel angle input, and six-coefficient aerodynamics.

\begin{table}[h]
\caption{Vehicle configurations. CompanyCar denotes the IPG CompanyCar models, and McLaren F1 is an open-wheel racing car. Wheelbase is the longitudinal distance between front and rear wheel centers in the vehicle data set.}
\label{tab:vehicles}
\centering
\small
\begin{tabular}{lllcl}
\toprule
\textbf{Vehicle} & \textbf{Powertrain} & \textbf{Driven axles} & \textbf{Wheelbase} & \textbf{Tires (front/rear)} \\
\midrule
DemoCar Ackermann & Combustion & Front & $2.54$\,m & 195/65 R15 \\
Audi R8 Weather & Combustion & Both & $2.75$\,m & 245/35 R19, 295/35 R19 \\
CompanyCar EV & Battery electric & Both, per axle & $2.62$\,m & 255/35 R20, 295/30 R20 \\
CompanyCar ModularPT P2 & P2 hybrid & Both, per axle & $2.62$\,m & 255/35 R20, 295/30 R20 \\
MB CClass350e & Parallel hybrid & Rear & $2.84$\,m & 225/50 R17, 245/45 R17 \\
McLaren F1 & Combustion & Rear & $3.58$\,m & racing \\
\bottomrule
\end{tabular}
\end{table}

\paragraph{Windows.}
Signals are decimated from $10$ to $20$\,ms. Each window uses $25$ past samples ($0.5$\,s), $25$ target samples per horizon, and $50$ future command steps ($1.0$\,s), sampled with stride five and at most $900$ windows per episode. The $22$ input channels are listed in Table~\ref{tab:channels} with their CarMaker names, in ISO 8855 axes with $x$ forward, $y$ to the left, and $z$ upward in the vehicle frame. Normalization statistics are fitted on training windows.

\begin{table}[h]
\caption{Input channels, all sampled at $20$\,ms.}
\label{tab:channels}
\centering
\scriptsize
\begin{tabular}{lll}
\toprule
\textbf{Group} & \textbf{Signal} & \textbf{CarMaker quantity} \\
\midrule
Outputs ($6$) & Fr1-origin position, yaw angle & \texttt{Car.Fr1.tx}, \texttt{Car.Fr1.ty}, \texttt{Car.Yaw} \\
 & body center-of-mass velocities, yaw rate & \texttt{Car.vx}, \texttt{Car.vy}, \texttt{Car.YawRate} \\
Commands ($3$) & steering-wheel angle, pedal activities & \texttt{DM.Steer.Ang}, \texttt{DM.Gas}, \texttt{DM.Brake} \\
Auxiliary ($13$) & body accelerations & \texttt{Car.ax}, \texttt{Car.ay}, \texttt{Car.az} \\
 & speed, roll and pitch rates & \texttt{Car.v}, \texttt{Car.RollVel}, \texttt{Car.PitchVel} \\
 & road-wheel steering angle & \texttt{Steer.WhlAng} \\
 & wheel speeds & \texttt{Car.vFL}, \texttt{Car.vFR}, \texttt{Car.vRL}, \texttt{Car.vRR} \\
 & master-cylinder pressure & \texttt{Brake.Hyd.Sys.pMC} \\
 & engine speed & \texttt{PT.Engine.rotv} \\
\bottomrule
\end{tabular}
\end{table}

\paragraph{Architecture and optimization.}
The encoder is a temporal CNN with widths $(64,128,128)$ and kernels $(5,5,3)$, producing a $64$-dimensional latent. The predictor is a controlled neural ODE integrated with RK4 in $0.02$\,s steps under the future command sequence. Its vector field is a spectrally normalized $\tanh$ MLP ($67\to128\to128\to64$). The target encoder is an exponential moving average of the online encoder with coefficient $0.996$. Training uses batch size $256$, encoder and predictor learning rates $3\times 10^{-4}$ and $9\times 10^{-4}$, weight decay $10^{-4}$, and prediction, variance, and covariance weights $25/25/1$, with no reconstruction head. Training runs between $20$ and $120$ epochs, with gradient-norm clipping at $1$, and keeps the epoch with the lowest validation value of the full objective, an improvement requiring a decrease of at least $10^{-4}$ and training stopping after $15$ epochs without one. The median fit stops at $34$ epochs and one of $90$ reaches the cap. The selected online and moving-average encoders are restored together. These settings are shared across vehicles and arms.

\paragraph{Readouts.}
Readouts are fitted after representation training on training windows and selected on validation windows. Linear readouts are ridge regressions. Nonlinear readouts are MLPs with two width-$128$ GELU hidden layers, a $350$-epoch maximum, and validation patience $40$, with the same architecture and budget on every representation. Each readout is trained on training windows only, with its checkpoint selected on validation windows and no refit on validation data. Validation scores still rise slowly near this budget, so reported readout $R^2$ may understate what a larger readout recovers. The yaw angle is regressed as a continuous angle relative to the run start.

\paragraph{Reference models.}
The untrained encoder has the architecture of the trained encoder, is initialized with the same seed, and receives no representation training. Its readout is fitted by the same recipe, and its neighborhoods are computed identically. There is one untrained encoder per vehicle, collection, and seed ($36$ in total), shared by the arms trained on that collection, and its readout is scored on the common panel for every arm. The supervised forecaster is an MLP with two width-$128$ hidden layers from the flattened $0.5$-second history and the logged future command sequence to the six outputs at $0.5$ or $1$\,s, trained for at most $350$ epochs with validation selection on the same episode roles. For the pulse analysis, the seed-$17$ forecasters were retrained with the same recipe and reproduce the reported test $R^2$ within $0.005$.

\paragraph{Response acquisition.}
Simulator-side expressions fix all three commands on an affine open-loop template joining the free-reference endpoints at the model's $20$\,ms samples. An affine template keeps the expressions within the complexity limit of the simulator interface, which sample-by-sample replay exceeds. Each perturbed command value is run twice and each family has three zero runs. Steering uses central stencils, and pedals use central or second-order one-sided stencils depending on actuator bounds. The realized commands of all $900$ runs match the specified family to $1.87\times 10^{-7}$ command training standard deviations. Signals are exported in single precision, and the propagated quantization bound stays below $0.38\%$ of the response norm in all $87$ accepted cases. Response experiments use the stock CarMaker $15.1$ binary, and training data used a custom build of the same version.

\paragraph{Latent-space response and ranking.}
Latent coordinates are standardized by each fit's training target standard deviations, floored at $0.1$ of the median coordinate standard deviation. The realized latent response must pass the same gate as the physical response, with less than $15\%$ change between adjacent amplitudes, repeat-propagated SNR of at least $5$, and norm above $10^{-6}$. Reloading all $90$ checkpoints reproduces the saved decoded outputs within a maximum absolute difference of $0.0017$. For ranking, $40$ of the $72$ future-pulse direction/horizon cases pass the physical gate and separate the simulator costs of their candidates by more than five repeat standard deviations, with a $10^{-8}$ floor. Every candidate is a simulated run, and combinations of pulses on different commands are not formed.

\paragraph{Response-targeted predictor updates.}
Encoder and readout are frozen and only the predictor is updated, for $80$ fixed steps with no checkpoint or weight selection on the evaluation anchor. The tuning anchor is the response panel of Section~\ref{sec:response}. The evaluation anchor ($2250$\,m, $180$\,km/h) uses the same $0.5$-second pulse duration and an amplitude of $0.04$ training standard deviations. It adds $90$ new runs, and the $84$ pulsed runs among them realize their commands and match the zero run before the pulse. About half of the accepted cases improve under the first two losses ($52/104$ and $50/104$), mostly for steering. The physical-response loss matches decoded pulse responses to the simulator, and the latent-response loss matches the predictor's pulse-induced latent change to the realized target-latent change without the readout. The forecast-preserving variant adds a weight-one penalty on changes to the original predictor outputs over $64$ training windows. Forecast change is measured on $256$ fixed windows from the original test episodes.

\section{Additional Results}\label{app:extra}

\fig[tbp]{width=0.92\linewidth}{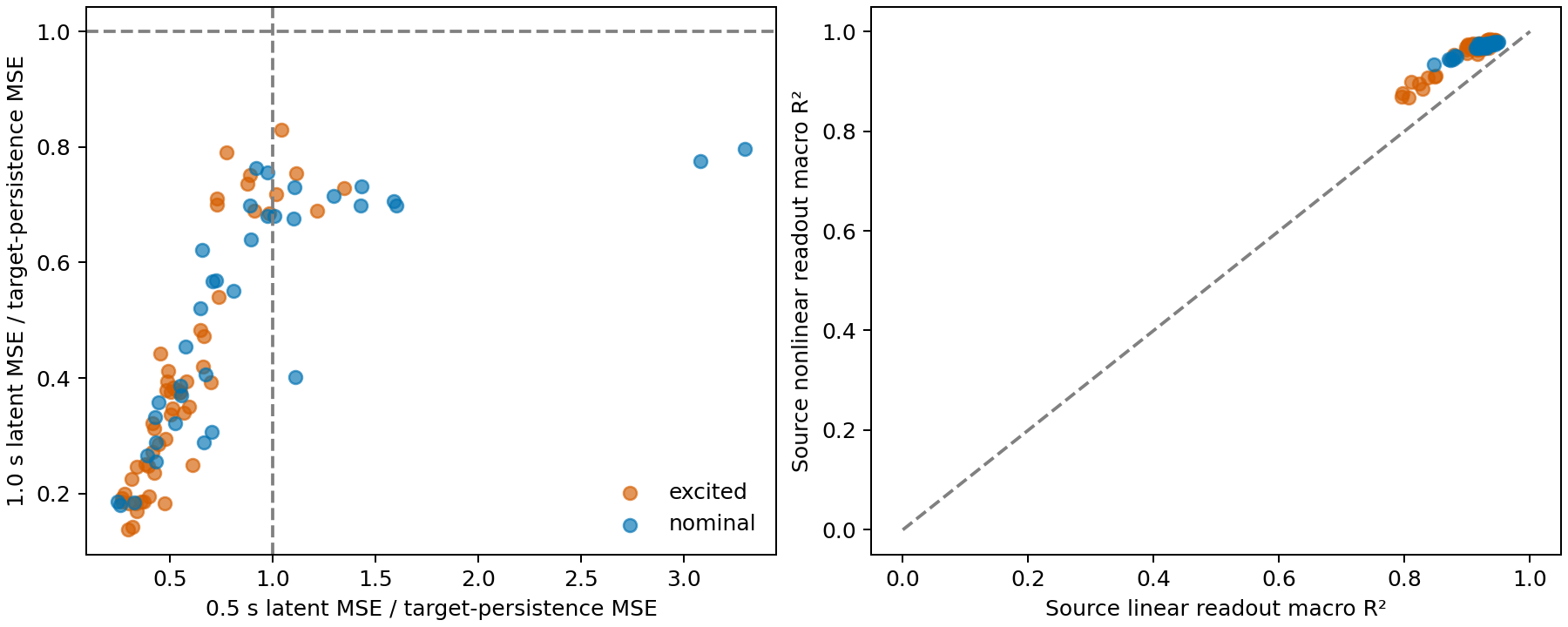}{Predictive competence and current-output retention on the common panel. Left: each fit's $0.5$- and $1$-second latent prediction MSE divided by its own target-encoder persistence MSE. Dashed lines mark ratio one. Right: linear versus nonlinear frozen source-readout macro $R^2$ for each fit. Orange and blue denote varied-start/speed and nominal collections.}

\begin{table}[t]
\caption{Median $R^2$ per output on the common panel ($18$ fits per arm). Top: current outputs from the frozen nonlinear source readout. Bottom: one-second decoded forecasts. Position is $(X,Y)$, and $\psi$ is the yaw angle.}
\label{tab:perout}
\centering
\scriptsize
\begin{tabular}{llcccccc}
\toprule
& \textbf{Arm} & $X$ & $Y$ & $\psi$ & $v_x$ & $v_y$ & $r$ \\
\midrule
\multirow{5}{*}{Current} & A & $0.951$ & $0.954$ & $0.975$ & $0.995$ & $0.968$ & $0.987$ \\
& B & $0.942$ & $0.956$ & $0.974$ & $0.994$ & $0.965$ & $0.986$ \\
& C & $0.956$ & $0.953$ & $0.977$ & $0.995$ & $0.970$ & $0.987$ \\
& D & $0.950$ & $0.962$ & $0.973$ & $0.995$ & $0.966$ & $0.986$ \\
& E & $0.960$ & $0.963$ & $0.984$ & $0.996$ & $0.964$ & $0.986$ \\
\midrule
\multirow{5}{*}{$1$\,s forecast} & A & $0.828$ & $0.831$ & $0.850$ & $0.952$ & $0.699$ & $0.872$ \\
& B & $0.800$ & $0.876$ & $0.923$ & $0.970$ & $0.744$ & $0.896$ \\
& C & $0.842$ & $0.831$ & $0.865$ & $0.941$ & $0.697$ & $0.895$ \\
& D & $0.677$ & $0.833$ & $0.828$ & $0.903$ & $0.689$ & $0.867$ \\
& E & $0.878$ & $0.900$ & $0.946$ & $0.980$ & $0.748$ & $0.908$ \\
\bottomrule
\end{tabular}
\end{table}

\fig[tbp]{width=0.92\linewidth}{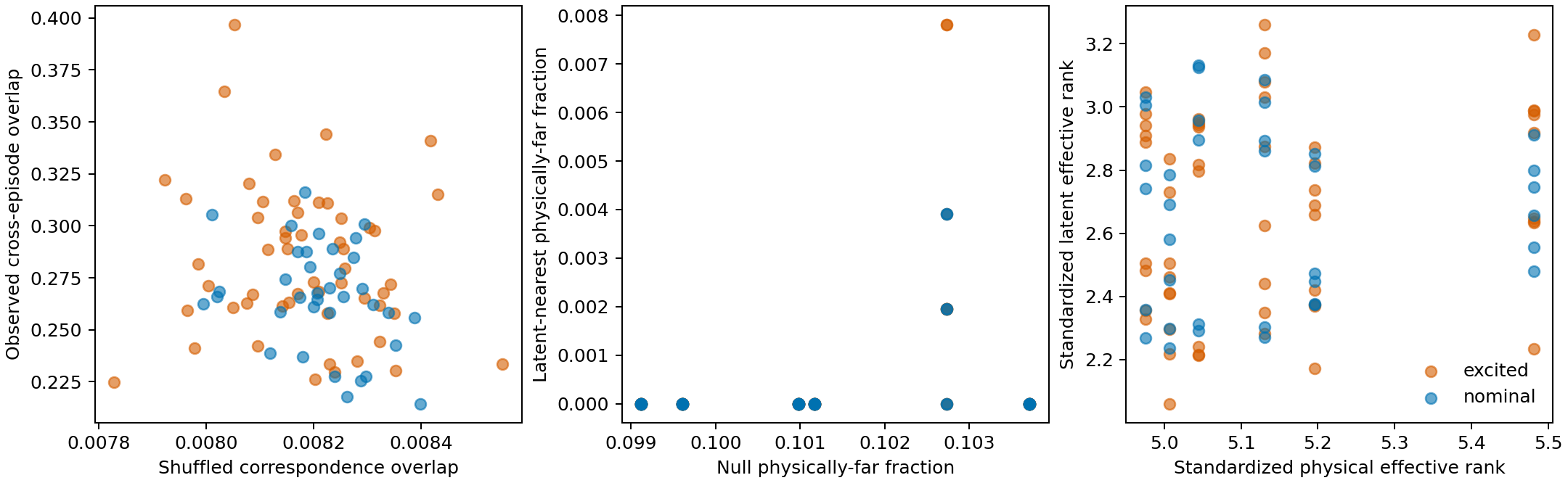}{Geometry diagnostics for individual fits on the common panel. Left: latent/physical neighbor overlap against the shuffled-correspondence overlap. Middle: fraction of physically distant latent-nearest neighbors against the random-pairing fraction. Right: standardized latent versus physical covariance effective rank. Orange and blue denote varied-start/speed and nominal collections.}

\fig[tbp]{width=0.92\linewidth}{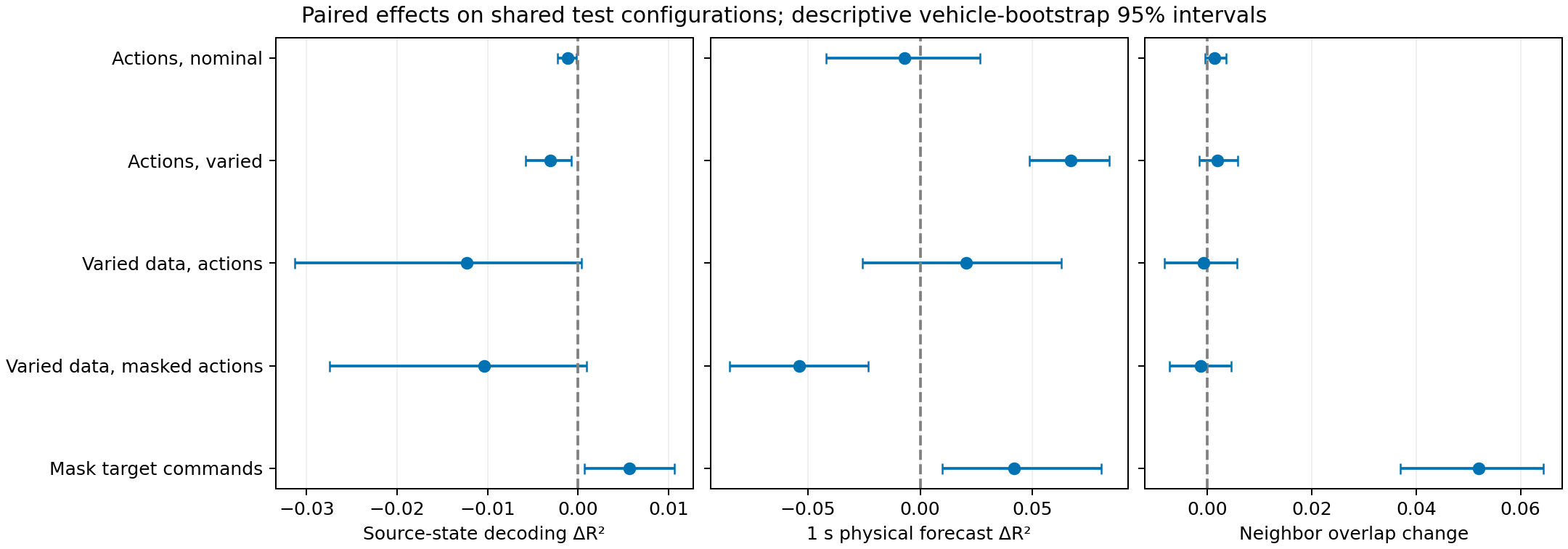}{Paired effects on current-output readout $R^2$, one-second physical forecast $R^2$, and neighbor overlap. Each point is the mean of six configuration-level effects after averaging seeds within configuration, and bars are descriptive 95\% bootstrap intervals over the six configurations. The vertical dashed line marks zero effect.}

\fig[tbp]{width=0.92\linewidth}{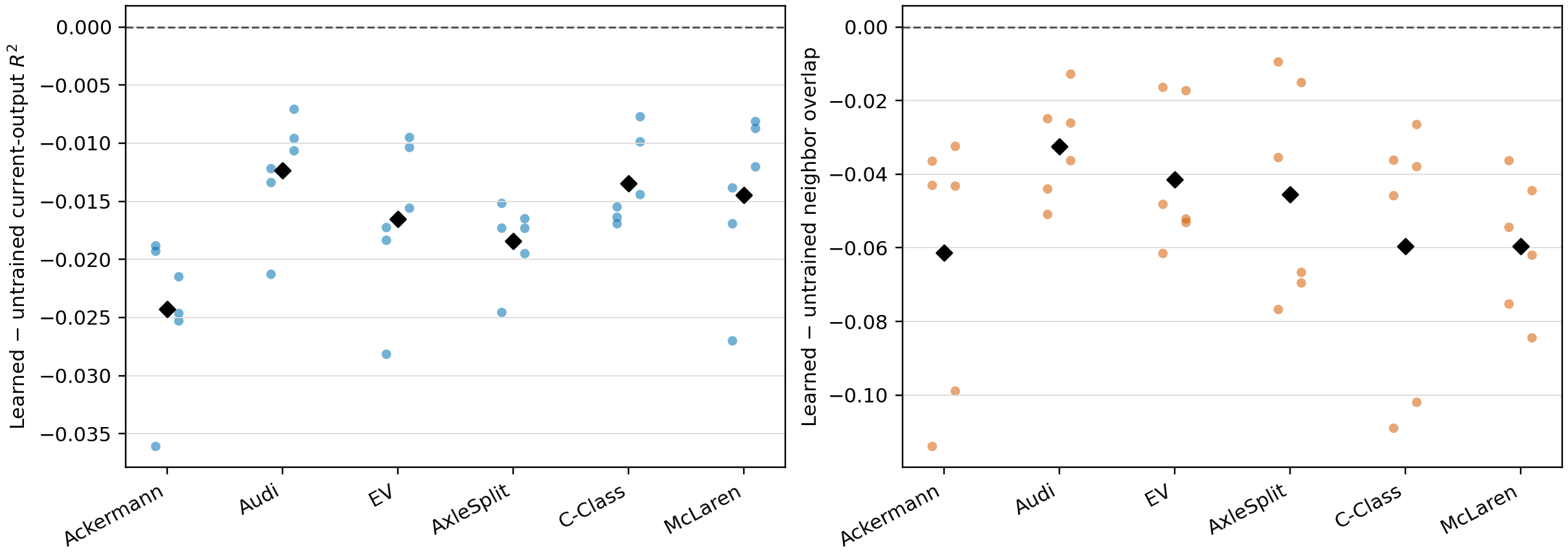}{Trained minus untrained encoder, same architecture, readout, and neighborhood protocol. Left: current-output readout macro $R^2$. Right: physical-neighbor overlap. Points are vehicle/collection/seed pairs for arms A and B on each collection's test episodes, and diamonds are vehicle means. Negative values mean the trained encoder scores below the untrained one.}

\fig[tbp]{width=0.80\linewidth}{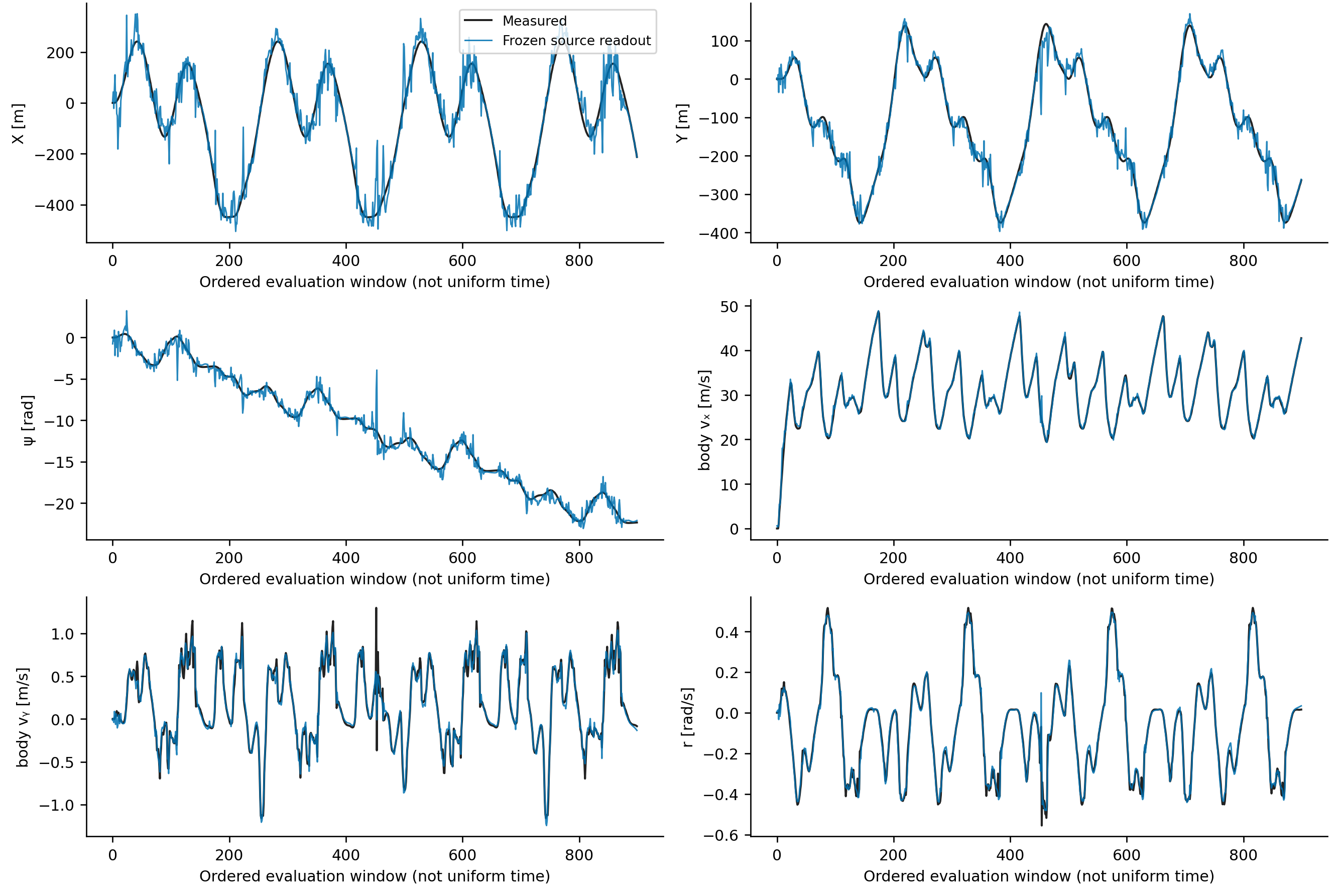}{Current planar outputs measured in CarMaker (black) and recovered by a frozen nonlinear source-latent readout (blue), for the upper median of the $18$ varied-start/speed, action-conditioned fits ranked by own-collection nonlinear decoding macro $R^2$, on the first test episode. The fit was chosen by rank before inspecting traces. Windows are ordered and irregularly spaced, so the horizontal axis is the window index. Training used no reconstruction objective.}

\fig[tbp]{width=\linewidth}{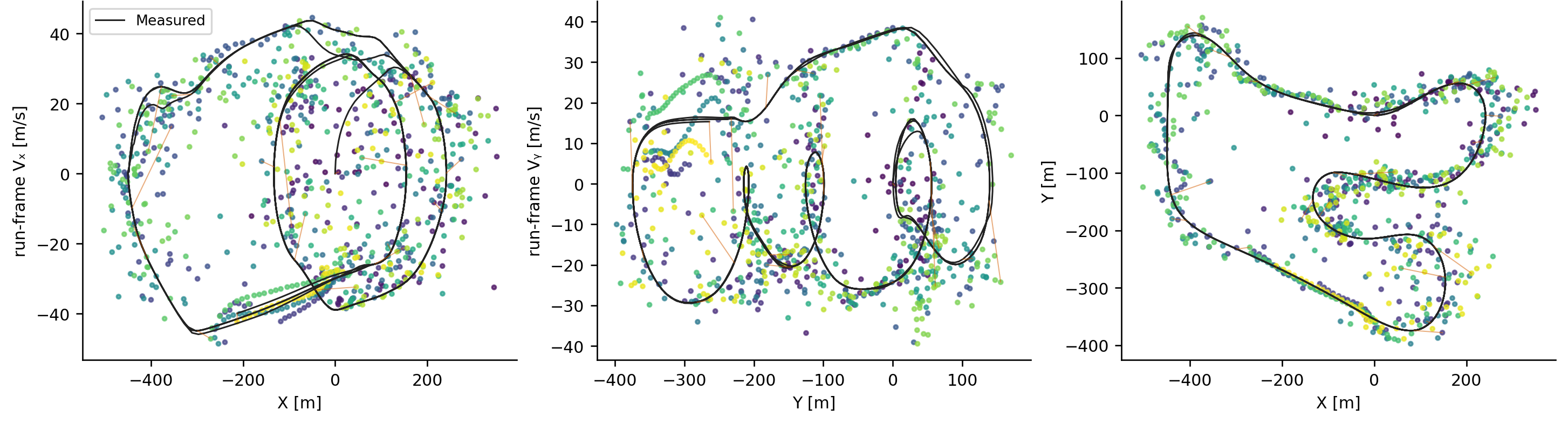}{Measured position--velocity projections (black) and jointly decoded projections (points, colored by episode order) for the fit of Figure~\ref{fig:B_excited_action_target_median_traces.png}. Body velocities are rotated into the run-initial frame with measured and decoded yaw respectively, $V_x=\cos\psi\,v_x-\sin\psi\,v_y$ and $V_y=\sin\psi\,v_x+\cos\psi\,v_y$, and connectors mark selected matched-point discrepancies. Agreement of projected shapes concerns values only, and injectivity and dynamics are tested in Sections~\ref{sec:results} and~\ref{sec:response}.}

\fig[tbp]{width=\linewidth}{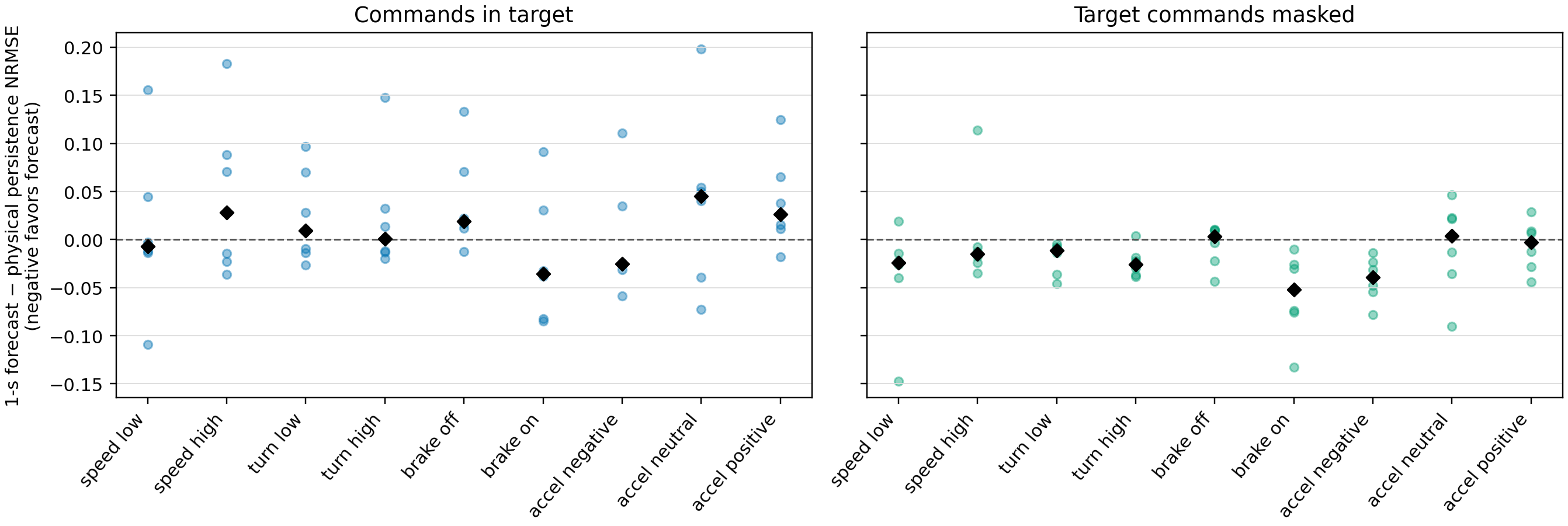}{One-second decoded forecast NRMSE minus physical-persistence NRMSE by operating regime on the common panel. Negative values favor the forecast. Left: arm B (commands in targets). Right: arm E (target commands masked). Points are vehicle-level seed averages, and diamonds are medians. Regimes are fixed before scoring. Speed and absolute yaw rate are split at their training medians, braking is brake-pedal activity above $0.02$, and deceleration, neutral, and acceleration are longitudinal acceleration below $-0.1$, between $-0.1$ and $0.1$, and above $0.1$\,m/s$^2$. The neutral bin holds $54$ to $113$ windows per fit.}

\begin{table}[t]
\caption{Coverage of simulator reference responses. Accepted cases pass the acceptance criteria. Causal nulls are late future pulses at $0.5$\,s. Unresolved cases are pedal pulses below the resolution floor or responses that change by more than $15\%$ across pulse sizes.}
\label{tab:coverage}
\centering
\scriptsize
\begin{tabular}{lccc}
\toprule
\textbf{Family / horizon} & \textbf{Accepted} & \textbf{Causal null} & \textbf{Unresolved} \\
\midrule
future, $0.5$\,s & $14/36$ & $18$ & $4$ \\
future, $1$\,s & $26/36$ & $0$ & $10$ \\
history, $0.5$\,s & $24/36$ & $0$ & $12$ \\
history, $1$\,s & $23/36$ & $0$ & $13$ \\
\bottomrule
\end{tabular}
\end{table}

\fig[tbp]{width=0.92\linewidth}{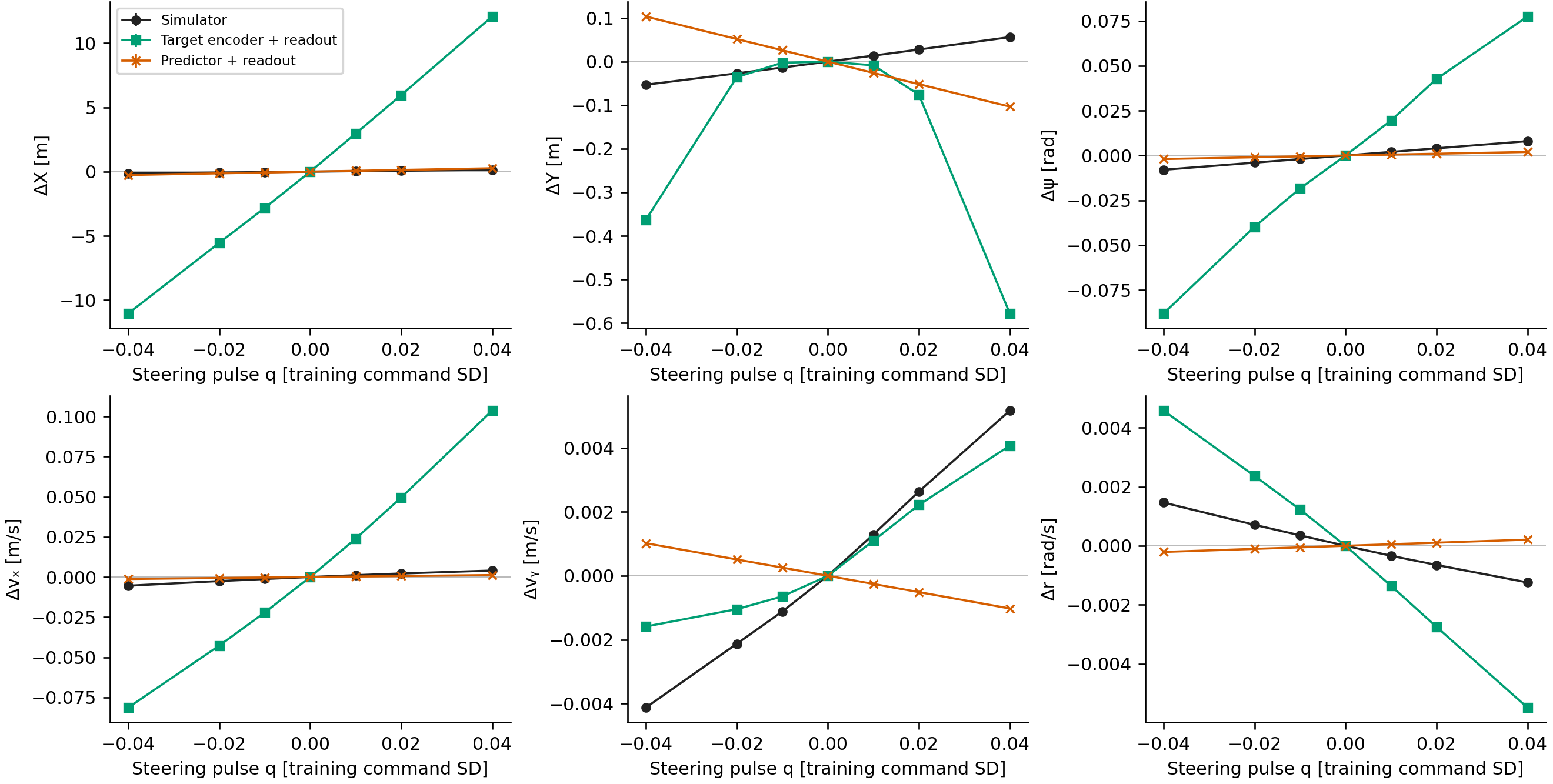}{Future steering finite-pulse responses for a fixed example chosen before inspecting accuracy (Ackermann, arm B, seed $17$, early steering, $1$\,s endpoint). Curves show changes from the matched zero run as pulse amplitude varies, for the simulator (black), the target and readout path (green), and the predictive path (orange), with repeat standard deviations as error bars (repeated runs produced identical saved endpoint values, so the bars have zero height and are not visible). Aggregate results for all fits are in Table~\ref{tab:agree}.}

\begin{table}[t]
\caption{Local value competence at $1$\,s on the simulator-timed families. Medians across six vehicles and three seeds. A latent ratio below one beats matched target-encoder persistence. Physical wins count fits whose forecast error is below their own persistence error, so they compare paired errors.}
\label{tab:localval}
\centering
\scriptsize
\begin{tabular}{llcccccc}
\toprule
\textbf{Arm} & \textbf{Family} & \textbf{Lat.\ ratio} & \textbf{Tgt NRMSE} & \textbf{Fcst NRMSE} & \textbf{Pers.\ NRMSE} & \textbf{Lat.\ wins} & \textbf{Phys.\ wins} \\
\midrule
A & future & $0.324$ & $0.055$ & $0.347$ & $0.289$ & $15/18$ & $6/18$ \\
A & history & $0.361$ & $0.081$ & $0.374$ & $0.270$ & $15/18$ & $5/18$ \\
B & future & $0.162$ & $0.057$ & $0.288$ & $0.289$ & $17/18$ & $11/18$ \\
B & history & $0.228$ & $0.074$ & $0.327$ & $0.270$ & $17/18$ & $8/18$ \\
C & future & $0.393$ & $0.044$ & $0.265$ & $0.289$ & $15/18$ & $9/18$ \\
C & history & $0.395$ & $0.060$ & $0.373$ & $0.270$ & $14/18$ & $7/18$ \\
D & future & $0.539$ & $0.053$ & $0.390$ & $0.289$ & $14/18$ & $7/18$ \\
D & history & $0.491$ & $0.066$ & $0.408$ & $0.270$ & $13/18$ & $7/18$ \\
E & future & $0.064$ & $0.059$ & $0.260$ & $0.289$ & $18/18$ & $13/18$ \\
E & history & $0.095$ & $0.062$ & $0.231$ & $0.270$ & $18/18$ & $12/18$ \\
\bottomrule
\end{tabular}
\end{table}

\begin{table}[t]
\caption{Decoder-free response comparison at $1$\,s: predicted versus realized target-latent response to the same pulses, in training-standardized target coordinates. Medians of vehicle-level medians. A norm ratio of one means equal magnitude and says nothing about direction.}
\label{tab:latent}
\centering
\scriptsize
\begin{tabular}{llccc}
\toprule
\textbf{Arm} & \textbf{Family} & \textbf{Norm ratio (pred./realized)} & \textbf{Rel.\ error} & \textbf{Cosine} \\
\midrule
B & future & $0.055$ & $0.995$ & $0.312$ \\
E & future & $0.174$ & $0.990$ & $0.355$ \\
B & history & $7.664$ & $7.536$ & $0.263$ \\
E & history & $5.487$ & $5.418$ & $0.387$ \\
\bottomrule
\end{tabular}
\end{table}

\fig[tbp]{width=\linewidth}{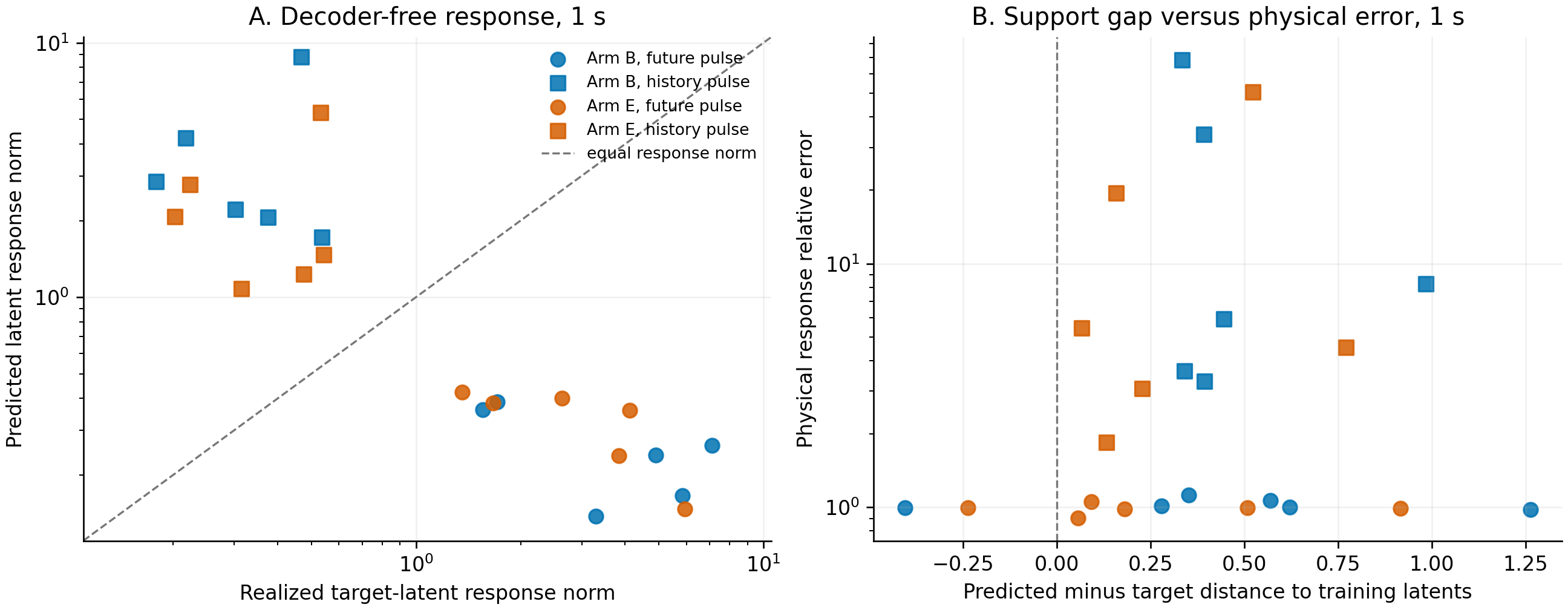}{Decoder-free response at $1$\,s. Left: norm of the predicted latent response against the realized target-latent response to the same pulse (vehicle medians). The dashed line marks equal norm, and direction agreement is in Table~\ref{tab:latent}. Right: predicted-minus-realized distance to sampled training target latents against physical response error. Predicted latents are usually farther from training targets (median ratios $1.16$--$1.44$), but the distance does not order the physical errors (within-fit rank correlations between $-0.09$ and $0.12$).}

\fig[tbp]{width=0.55\linewidth}{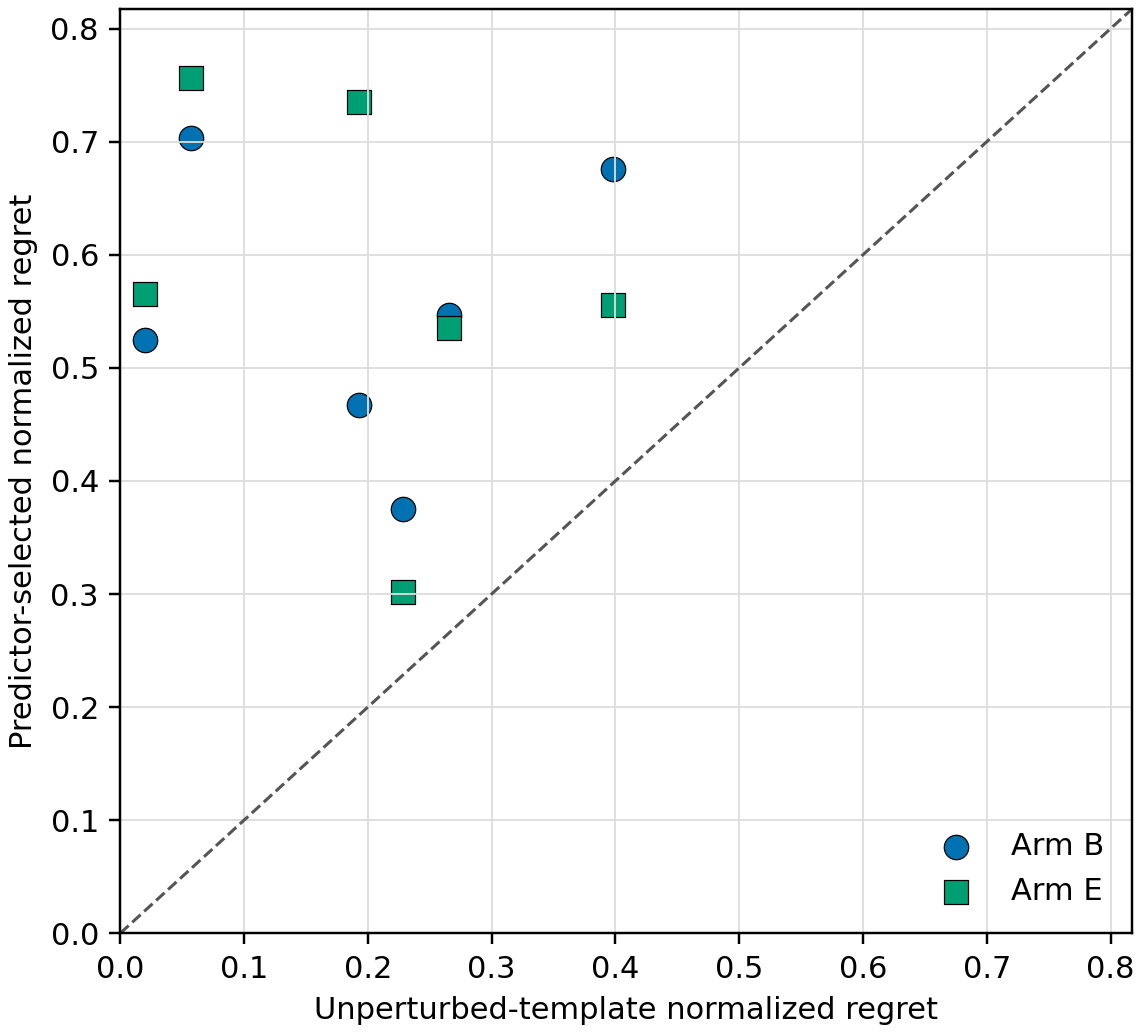}{Finite-candidate ranking. Vehicle-mean normalized regret of the model-selected command (vertical) against keeping the unperturbed template (horizontal), for arms B and E. Each point combines three seeds and the accepted cases of one vehicle. Points above the diagonal mean the model choice was worse than the template.}

\fig[tbp]{width=0.92\linewidth}{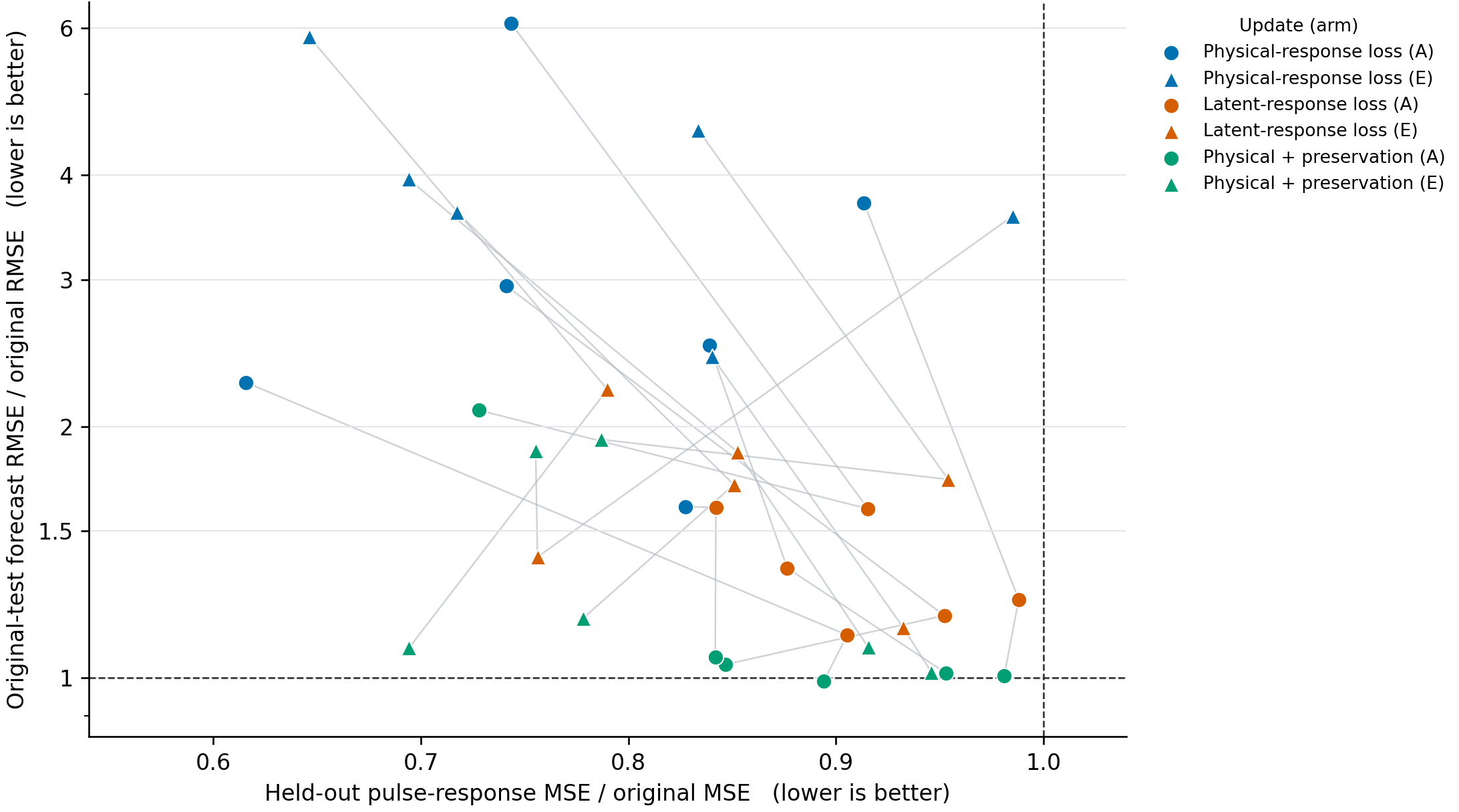}{Response-targeted predictor updates for the seed-$17$ fits of arms A and E. Horizontal: held-out pulse-response MSE after the update divided by before, at the independent anchor. Vertical: one-second forecast RMSE on the original test episodes after divided by before. Colors mark the update objective, and gray segments join the three updates of the same fit. The forecast-preserving variant adds a penalty on changes to the original forecasts.}

\paragraph{Response-targeted updates.}
The physical-response and latent-response updates lower the held-out response error with median ratios of $0.785$ and $0.891$ and raise the one-second forecast RMSE by median factors of $3.59$ and $1.49$. Adding the preservation penalty keeps the held-out response improvement ($12/12$ fits, median ratio $0.844$, $61/104$ resolved cases improved) and reduces the forecast cost. Original-test forecast RMSE still increases in $11/12$ fits, with median ratio $1.073$ and maximum $2.092$. One McLaren fit improves both. Because the forecast-preserving variant was formulated in response to the observed tradeoff, we treat it as exploratory, and a confirmatory evaluation would require a prespecified protocol and a fresh anchor.

\fig[tbp]{width=0.92\linewidth}{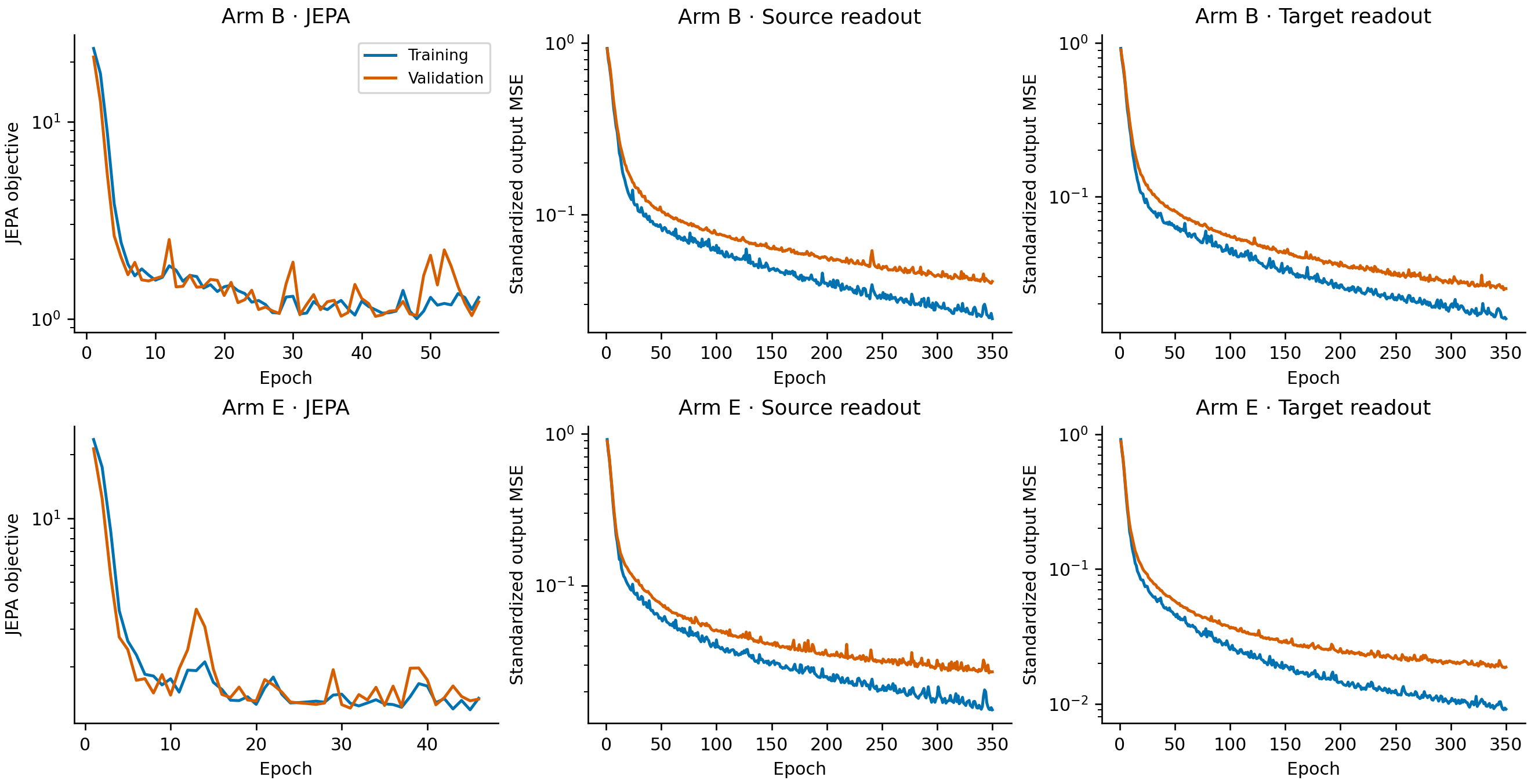}{Train and validation learning curves for the Ackermann, seed-$17$ models in arms B and E, with source and target readouts shown separately. Loss values live in each fit's own target space and are comparable within a fit only.}

\paragraph{Response robustness.}
In arm E, the $1$\,s future/history predictor relative errors are $0.995/5.103$ on the full accepted set, $0.995/5.025$ when both learned paths also pass the scale test, and $0.996/4.985$ among cases beating both local latent and physical persistence. Joint scale agreement holds for both learned paths in about $93.7\%$ of physically accepted comparison rows. The last subset has $53$ future and $40$ history fit/case rows with uneven vehicle and direction coverage. Absolute errors accompany ratios because a small physical response inflates a relative history error.

\paragraph{Sensitivity of the response comparison.}
Table~\ref{tab:gates} varies the acceptance criteria of Section~\ref{sec:metrics}. The median predictor errors of arms B and E change by at most $0.03$ for future pulses and not at all for history pulses. For accepted one-second cases, the predictor relative error at pulse sizes $0.01$, $0.02$, and $0.04$ is $1.022$ at each size for arm B and $0.995$ for arm E on future pulses, and rises from $8.19$ to $8.33$ (B) and from $5.10$ to $5.45$ (E) on history pulses. On the $18$ scale-inconsistent cases, the smallest-amplitude secant gives future-pulse errors of $0.86$ to $0.98$ and history-pulse errors of $13$ to $48$, an exploratory check without scale certification.

\begin{table}[h]
\caption{Accepted cases and vehicle-balanced one-second predictor relative error (future / history) under alternative acceptance criteria. The default is a $15\%$ scale threshold and a $10^{-6}$ norm floor.}
\label{tab:gates}
\centering
\scriptsize
\begin{tabular}{lcccc}
\toprule
\textbf{Criterion} & \textbf{Future $0.5$/$1$\,s} & \textbf{History $0.5$/$1$\,s} & \textbf{Arm B error} & \textbf{Arm E error} \\
\midrule
Scale $10\%$ & $14/23$ & $24/23$ & $0.996$ / $8.193$ & $0.995$ / $5.103$ \\
Scale $15\%$ (default) & $14/26$ & $24/23$ & $1.022$ / $8.193$ & $0.995$ / $5.103$ \\
Scale $20\%$ & $15/28$ & $26/25$ & $1.022$ / $8.193$ & $0.995$ / $5.103$ \\
Scale $25\%$ & $15/29$ & $26/26$ & $1.024$ / $8.193$ & $0.995$ / $5.103$ \\
Scale $30\%$ & $15/29$ & $27/26$ & $1.024$ / $8.193$ & $0.995$ / $5.103$ \\
Norm floor $10^{-7}$ to $10^{-5}$ & $14/26$ & $24/23$ & $1.022$ / $8.193$ & $0.995$ / $5.103$ \\
\bottomrule
\end{tabular}
\end{table}

\paragraph{Second anchor.}
At $2250$\,m and $180$\,km/h the simulator provides future steering, gas, and brake pulses over the first half-second at one amplitude ($0.04$ command standard deviations), with central or one-sided stencils as actuator bounds permit. These are secants without the scale check. Of $36$ direction and horizon cases, $34$ exceed the norm floor. Table~\ref{tab:anchor2} scores all $90$ original fits, averaging seeds within each case. Arms C and D give zero predictor response by construction, with relative error $1$ and an undefined cosine.

\begin{table}[h]
\caption{Second-anchor response agreement for the original fits (future pulses). Predictor and target rows compare decoded responses with the simulator. Latent rows compare predicted with realized target-latent responses, and their gain is the norm ratio.}
\label{tab:anchor2}
\centering
\scriptsize
\begin{tabular}{llcccccc}
\toprule
& & \multicolumn{3}{c}{\textbf{$0.5$\,s}} & \multicolumn{3}{c}{\textbf{$1$\,s}} \\
\textbf{Arm} & \textbf{Path} & Rel.\ err. & Cosine & Gain & Rel.\ err. & Cosine & Gain \\
\midrule
A & predictor & $1.004$ & $0.102$ & $0.155$ & $1.030$ & $0.167$ & $0.203$ \\
A & target & $1.557$ & $0.711$ & $1.675$ & $1.081$ & $0.549$ & $1.458$ \\
B & predictor & $1.007$ & $0.153$ & $0.343$ & $1.046$ & $0.146$ & $0.342$ \\
B & target & $1.847$ & $0.725$ & $2.095$ & $1.058$ & $0.572$ & $1.394$ \\
C & target & $1.651$ & $0.671$ & $1.775$ & $1.247$ & $0.534$ & $1.685$ \\
D & target & $1.687$ & $0.694$ & $1.890$ & $1.255$ & $0.601$ & $1.416$ \\
E & predictor & $0.986$ & $0.260$ & $0.358$ & $0.993$ & $0.210$ & $0.416$ \\
E & target & $0.831$ & $0.696$ & $1.223$ & $1.015$ & $0.519$ & $1.281$ \\
B & latent & $0.953$ & $0.461$ & $0.113$ & $1.013$ & $0.120$ & $0.319$ \\
E & latent & $0.978$ & $0.407$ & $0.252$ & $0.964$ & $0.329$ & $0.380$ \\
\bottomrule
\end{tabular}
\end{table}

\paragraph{Supervised forecaster responses.}
Table~\ref{tab:supervised} scores the supervised forecaster on the pulses and accepted cases of Section~\ref{sec:response}, with the same stencils and aggregation (one seed, so no seed averaging), next to the matched seed-$17$ predictors of arms A and B trained on the same collections. At the zero pulse member its median six-output NRMSE is $0.044$ and $0.065$ at $0.5$ and $1$\,s for the future family and $0.064$ and $0.086$ for the history family.

\begin{table}[h]
\caption{Pulse responses of the supervised forecaster (Sup.) and of the matched latent predictors (Pred.), seed $17$, vehicle-balanced medians.}
\label{tab:supervised}
\centering
\scriptsize
\begin{tabular}{lllcccccc}
\toprule
& & & \multicolumn{3}{c}{\textbf{Sup.}} & \multicolumn{3}{c}{\textbf{Pred.}} \\
\textbf{Coll.} & \textbf{Family} & $h$ & Rel.\ err. & Cosine & Gain & Rel.\ err. & Cosine & Gain \\
\midrule
A & future & $0.5$ & $0.755$ & $0.829$ & $0.774$ & $1.001$ & $0.173$ & $0.194$ \\
A & future & $1$ & $1.058$ & $0.756$ & $1.263$ & $1.001$ & $0.054$ & $0.208$ \\
A & history & $0.5$ & $3.073$ & $0.371$ & $3.246$ & $11.375$ & $0.014$ & $11.549$ \\
A & history & $1$ & $2.596$ & $0.123$ & $2.083$ & $7.680$ & $0.072$ & $7.653$ \\
B & future & $0.5$ & $0.865$ & $0.744$ & $0.959$ & $1.000$ & $0.123$ & $0.318$ \\
B & future & $1$ & $1.017$ & $0.757$ & $1.357$ & $1.005$ & $0.154$ & $0.290$ \\
B & history & $0.5$ & $2.681$ & $0.344$ & $2.835$ & $11.319$ & $0.179$ & $11.716$ \\
B & history & $1$ & $3.538$ & $0.028$ & $3.009$ & $8.091$ & $0.109$ & $8.140$ \\
\bottomrule
\end{tabular}
\end{table}

\paragraph{Readout capacity.}
For the $12$ seed-$17$ pairs of arms A and B on the common panel, Table~\ref{tab:capacity} repeats the source readout at larger capacity, with identical recipes for trained and untrained latents. A ridge readout on the untrained latent reaches median $R^2$ $0.928$, close to the trained linear readouts, and the same MLP on the raw standardized history reaches $0.999$, the ceiling set by the outputs being inputs.

\begin{table}[h]
\caption{Median source-readout macro $R^2$ at three readout capacities, seed-$17$ fits of arms A and B ($12$ pairs).}
\label{tab:capacity}
\centering
\scriptsize
\begin{tabular}{lcccc}
\toprule
\textbf{Readout} & \textbf{Trained} & \textbf{Untrained} & \textbf{Paired gap} & \textbf{Untrained higher} \\
\midrule
GELU $2\times128$, $350$ epochs & $0.974$ & $0.988$ & $0.015$ & $12/12$ \\
GELU $2\times256$, $1000$ epochs & $0.982$ & $0.991$ & $0.008$ & $12/12$ \\
GELU $3\times512$, $1000$ epochs & $0.985$ & $0.992$ & $0.005$ & $12/12$ \\
\bottomrule
\end{tabular}
\end{table}

\paragraph{Geometry sensitivity.}
For the seed-$17$ fits of all arms on the common panel, Table~\ref{tab:geomsens} varies the neighborhood size. The shuffled-correspondence null is about $0.008$ at $k=30$. The collision fraction is zero at the median for trained and untrained latents at the $80$th, $90$th, and $95$th percentile thresholds (largest per-fit value $0.002$), against random-pairing references of about $0.20$, $0.10$, and $0.05$.

\begin{table}[h]
\caption{Common-panel neighbor overlap at $k=30$ (seed-$17$ medians) and the number of vehicles in which the untrained encoder scores higher at $k=10$, $30$, and $100$.}
\label{tab:geomsens}
\centering
\scriptsize
\begin{tabular}{lccc}
\toprule
\textbf{Arm} & \textbf{Trained} & \textbf{Untrained} & \textbf{Untrained higher ($k=10/30/100$)} \\
\midrule
A & $0.241$ & $0.322$ & $6/6$, $6/6$, $6/6$ \\
B & $0.247$ & $0.328$ & $6/6$, $6/6$, $6/6$ \\
C & $0.232$ & $0.322$ & $6/6$, $6/6$, $6/6$ \\
D & $0.243$ & $0.328$ & $6/6$, $6/6$, $6/6$ \\
E & $0.317$ & $0.328$ & $3/6$, $3/6$, $5/6$ \\
\bottomrule
\end{tabular}
\end{table}

\paragraph{Target content.}
Ridge regressions fitted on training windows, selected on validation windows, and scored on the common panel predict the $64$ standardized target coordinates from the future commands, from the source latent, or from both, and predict the future-window commands from the target latent (Table~\ref{tab:probe}, all seeds of arms B and E). The probes measure linear predictability of the targets and leave open which part the predictor uses.

\begin{table}[h]
\caption{Target-content probes, median $R^2$ at $0.5$ / $1$\,s, and the B$-$E vehicle-effect median with its sign count.}
\label{tab:probe}
\centering
\scriptsize
\begin{tabular}{lccc}
\toprule
\textbf{Probe} & \textbf{Arm B} & \textbf{Arm E} & \textbf{B$-$E} \\
\midrule
Target latent from future commands & $0.535$ / $0.560$ & $0.354$ / $0.387$ & $+0.194$ / $+0.187$, $6/6$ positive \\
Target latent from source latent & $0.954$ / $0.889$ & $0.970$ / $0.941$ & $-0.017$ / $-0.045$, $6/6$ negative \\
Target latent from both & $0.973$ / $0.953$ & $0.978$ / $0.963$ & $-0.006$ / $-0.009$, $6/6$ negative \\
Future commands from target latent & $0.929$ / $0.933$ & $0.888$ / $0.889$ & $+0.045$ / $+0.044$, $6/6$ positive \\
\bottomrule
\end{tabular}
\end{table}

\end{document}